\documentclass{article}

\usepackage{microtype}
\usepackage{graphicx}
\usepackage{subcaption}
\usepackage{booktabs} 
\usepackage{enumitem}
\usepackage{multirow}

\usepackage{hyperref}

\usepackage[accepted]{icml2026}

\usepackage{amsmath}
\usepackage{amssymb}
\usepackage{mathtools}
\usepackage{amsthm}

\usepackage[capitalize,noabbrev]{cleveref}

\theoremstyle{plain}
\newtheorem{proposition}{Proposition}
\theoremstyle{definition}
\newtheorem{definition}{Definition}
\newtheorem{assumption}{Assumption}
\theoremstyle{remark}

\usepackage[textsize=tiny]{todonotes}

\icmltitlerunning{Unsupervised Disentanglement Without Compromises : How Functional Orthogonality Enforces Identifiability}

\begin{document}

\twocolumn[
  \icmltitle{Unsupervised Disentanglement Without Compromises : How Functional Orthogonality Enforces Identifiability}



  \icmlsetsymbol{equal}{*}

  \begin{icmlauthorlist}
    \icmlauthor{Mathieu Cyrille Simon}{yyy}
    \icmlauthor{Pascal Frossard}{zzz}
    \icmlauthor{Christophe De Vleeschouwer}{yyy}
  \end{icmlauthorlist}

  \icmlaffiliation{yyy}{UCLouvain, ICTEAM, Louvain-la-Neuve, Belgium}
  \icmlaffiliation{zzz}{EPFL, LTS4 laboratory, Lausanne, Switzerland}

  \icmlcorrespondingauthor{Mathieu Cyrille Simon}{mathieu.simon@uclouvain.be}

  \icmlkeywords{Disentangled Representation Learning, Identifiability, Causality, Variational Autoencoders, Unsupervised Learning}

  \vskip 0.3in
]



\printAffiliationsAndNotice{}  

\begin{abstract}
    This paper explores unsupervised disentangled representation learning from a functional perspective. We define latent concepts as factors that influence observations through locally orthogonal directions, formalized as an orthogonality constraint on the Jacobian of the generative mapping. We prove that this condition yields identifiability of general nonlinear generative models, without requiring statistical independence or causal assumptions, provided the latent domain admits all combinations of factor values. Experiments with orthogonality-regularized normalizing flows empirically confirm the theory, demonstrate reliable recovery of ground-truth factors, and shed light on the success of VAEs. These findings challenge the prevailing impossibility claims for unsupervised disentanglement and provide a principled alternative foundation.
\end{abstract}

\section{Introduction}\label{sec:intro}

In \textit{Disentangled Representation Learning} (DRL), the main assumption is that data is generated from a small number of underlying low-dimensional latent factors, often referred to as concepts. The objective is to recover a representation that separates these concepts, ideally in an unsupervised manner \citep{bengio2013representation,wang2024disentangled}. For instance, in facial images, underlying concepts might include hair length, hair color, or face orientation. But what exactly constitutes a concept? If unsupervised disentanglement is to be meaningful, it presupposes some form of intrinsic, human-free definition of the factors. Many prior works define concepts as statistically independent factors of variation \citep{higgins2017beta,kim2018disentangling,chen2018isolating} similarly to ICA \citep{lee1998independent,oja2003independent}. However, this assumption frequently fails to align with human intuition. Some semantically distinct concepts can be statistically dependent, while spurious correlations in the data may give rise to apparent dependencies between otherwise unrelated factors. To address these limitations, some approaches introduce causal modeling \citep{yang2020causalvae,brehmer2022weakly,lippe2022citris,xu2024sparsity,von2023nonparametric,lachapelle2022disentanglement}, treating factors as nodes in a causal graph. Yet, enforcing causal assumptions requires strong priors and may obscure the practical goal of capturing semantically meaningful factors. Often, the aim is not to recover the true causal structure, but rather to identify and represent the factors and their relationships, especially in cases where the causal link is unclear or difficult to justify. For example, while hair length and gender are clearly distinct concepts, they often exhibit statistical dependence, yet a direct causal link between them is not well-defined. 

Thus, we propose an alternative perspective on what constitutes a concept. We argue that, although concepts may exhibit statistical dependence, they should be understood as factors that influence observations through distinct and independent variations, not in the statistical sense, but in terms of their functional effects on the generative process. Each factor controls its own distinct aspect of the observation. This perspective aligns closely with the \textit{Independent Causal Mechanisms} (ICM) principle \citep{scholkopf2021toward} from causal inference \citep{pearl2009causality,peters2017elements}, which posits that the causal generative process of a system is composed of autonomous modules that do not influence each other. Applied outside a causal context, it can be formalized as an orthogonality condition on the columns of the Jacobian of the generative function mapping latent factors to observation \citep{gresele2021independent}. 
This captures a form of local decoupling in how each factor contributes to the observed data. 
Unlike prior works that treat this property as a sufficient condition to recover statistically independent sources \citep{buchholz2022function,gresele2021independent}, we take a stronger stance: we argue that orthogonal influence is an \emph{intrinsic defining property of meaningful concepts}. In contrast to these approaches, which rely on statistical independence and restricted functional families, we consider more general nonlinear mixing functions with orthogonal Jacobians and explicitly allow for dependent latent factors. This perspective moves beyond the “Independent” in ICA and suggests that orthogonality, not independence, is the fundamental structural property underlying disentanglement.

Building on this framework, we show that \textit{disentanglement emerges naturally from orthogonal influence}, without requiring statistical independence or supervision. This challenges the prevailing view that unsupervised disentanglement is fundamentally impossible, and demonstrates instead that meaningful disentanglement can arise \emph{without compromise} provided that one adopts a functionally grounded notion of independence. We support our theoretical findings with empirical results and discuss implications for understanding why models such as Variational Autoencoders implicitly promote disentangled representations. More broadly, our work offers a principled step toward unifying disentanglement and causality under a common functional framework.

\section{Related Works}\label{sec:related}

Early approaches to disentangled representation learning were primarily inspired by \textit{Independent Component Analysis} (ICA) \citep{lee1998independent,oja2003independent}, aiming to recover statistically independent factors of variation from observed data in an unsupervised manner \citep{kingma2013auto,higgins2017beta}. However, as formalized by \citet{locatello2019challenging}, purely unsupervised disentanglement is fundamentally impossible without strong inductive biases. Specifically, there exist infinitely many equally valid latent representations that yield identical marginal distributions over observations, rendering the generative model non-identifiable. This impossibility statement mirrors classical findings in nonlinear ICA \citep{hyvarinen1999nonlinear}, where disentanglement has been proven to fail in the absence of additional assumptions or supervision \citep{hyvarinen2023nonlinear,hyvarinen2024identifiability,zheng2022identifiability,hyvarinen2017nonlinear}. Consequently, subsequent works have sought to introduce suitable inductive biases, either through architectural constraints \citep{bouchacourt2018multi,chen2016variational,locatello2020weakly,eastwood2023self,simon2024sequential}, weighting terms \citep{chen2018isolating,kim2018disentangling}, or partial supervision via auxiliary variables \citep{locatello2019disentangling, mita2021identifiable,halva2020hidden,gabbay2021image,hyvarinen2016unsupervised}. Comparatively, in this paper, we focus on the fully unsupervised setting, emphasizing the need for a more principled notion of factors that goes beyond mere statistical independence as used previously and that would lead to an identifiable representation. 

Several lines of research have explored alternative formulations of disentanglement through group theory \citep{zhu2021commutative,higgins2018towards}, second-order Hessian-based penalties \citep{peebles2020hessian}, Jacobian regularization \citep{lezama2019overcoming} or directly encouraging orthogonality between latent directions \citep{wei2021orthogonal,cha2023orthogonality}. But, the first theoretical framework to formalize this idea of functional orthogonality was introduced by \cite{gresele2021independent}, under the name \textit{Independent Mechanism Analysis} (IMA). Subsequent works extended this framework to higher-dimensional latent spaces under the manifold hypothesis \citep{ghosh2023independent} and demonstrated that VAEs implicitly encourage IMA-like structure through their training objective \citep{reizinger2022embrace,allen2024unpicking}, providing a partial explanation for their empirical disentangling ability despite the absence of explicit supervision. The original IMA formulation showed that enforcing orthogonality in the Jacobian eliminates spurious and degenerate ICA solutions in specific nonlinear settings. Later studies generalized these findings, proving identifiability results for restricted classes of transformations such as conformal mappings \citep{buchholz2022function,zheng2022identifiability} or local isometry \citep{horan2021unsupervised}.
In contrast to these prior efforts, our work provides a comprehensive theoretical treatment of identifiability under a more \textit{general} case of orthogonal Jacobians, without restricting to specific functional families. Furthermore, whereas previous analyses assumed independent latent factors, we explicitly address the more realistic case of dependent concepts, thereby broadening the applicability of IMA to real-world generative processes and offering a unified framework for disentanglement beyond statistical independence.
\section{Problem Formulation}\label{sec:problem}

We start by formalizing the data-generating process. We assume that observations $\mathbf{x}\in \mathbb{R}^n$ arise from latent concepts $\mathbf{z}\in \mathbb{R}^d$, with $d \ll n$, through a real-analytic and invertible mapping $\mathbf{f}:\mathbb{R}^d \rightarrow \mathbb{R}^n$, which acts as a mixing function 
\begin{align}
    \mathbf{x}=\mathbf{f}(\mathbf{z}), & \quad\quad\quad\quad  p(\mathbf{z})=\prod\nolimits_{i=1}^{d}p(\mathbf{z}_i|\mathbf{z}_{<i}),
\end{align}
for \textit{any} ordering of the components. In quantile coordinates, up to a rotation, $\mathbf{f}$ extends real-analytically and non-degenerately to the closure of the latent domain. 
Crucially, compared to \textit{Blind Source Separation} (BSS) in \cite{gresele2021independent}, we do not assume the underlying concepts $\mathbf{z}$ to be independent and their distribution might be modeled recursively by a triangular mapping from independent noise, $z=\phi(\epsilon)$, e.g. with the Darmois construction \citep{darmois1953analyse,hyvarinen1999nonlinear}. The goal is to learn an unmixing function $\hat{\mathbf{z}}= \mathbf{g}^{-1}(\mathbf{x})$ that would inverse $\mathbf{f}$ so that we recover the true concepts $\mathbf{z}$ up to some tolerable ambiguities. Formally, we adopt the definition of identifiability introduced in \cite{gresele2021independent}. 

\begin{definition}\label{def:1}
    Let $\mathcal{F}$ be 
    the set of all real-analytic and invertible functions
    $\mathbf{f}:\mathbb{R}^d\rightarrow\mathbb{R}^n$ and $\mathcal{P}$ be the set of all densities $p_{\mathbf{z}}$ with simply connected support on $\mathrm{R}^d$.  $\mathcal{M} \subseteq \mathcal{F}\times \mathcal{P}$ defines the subspace of models. The generative process is said to be \textit{identifiable} on $\mathcal{M}$ if 
\begin{align}
    & \forall (\mathbf{f},p_{\mathbf{z}}),(\mathbf{g},p_{\hat{\mathbf{z}}})\in \mathcal{M}\text{:}\hspace{0.5em} \mathbf{f}_{*}p_{\mathbf{z}}=\mathbf{g}_{*}p_{\hat{\mathbf{z}}} \hspace{0.5em} \notag \\
    \Rightarrow & \hspace{0.2em} \exists \mathbf{P}, \mathbf{t} \hspace{0.5em} \text{s.t.} \hspace{0.5em} (\mathbf{f},p_{\mathbf{z}})=(\mathbf{g}\circ\mathbf{P}^{-1}\circ\mathbf{t}^{-1},(\mathbf{P}\circ\mathbf{t})_{*}p_{\hat{\mathbf{z}}}).
\end{align}
Where $\mathbf{f}_*p_\mathbf{z}$ denotes the pushforward of $p_\mathbf{z}$ through $\mathbf{f}$, $\mathbf{P}$ is an arbitrary permutation, and $\mathbf{t}$ is an element-wise reparameterization of the concepts.
\end{definition}

Identifiability \citep{lehmann1998theory} expresses the idea that, under suitable assumptions, a learned representation should coincide with the true generative factors up to well-characterized ambiguities: a permutation and a dimension-wise reparameterization.
Such ambiguities are acceptable as they preserve the disentangled structure of the representation. Since there is no canonical choice of coordinate system, different parameterizations remain equally valid.

Unfortunately, such identifiability cannot be achieved under the current formulation without further constraints. To illustrate this, consider the case where the latent variables $\mathbf{z}$ are generated from independent noise via an arbitrary function $\phi$, and subsequently mapped to observations via another bijection $\mathbf{f}$. In the absence of additional assumptions, any pair $(\phi, \mathbf{f})$ yielding the same distribution over $\mathbf{x}$ is equally valid. This means that $\mathbf{z}$ could be any arbitrary entangled transformation of the true underlying factors, with dependencies entirely absorbed by $\phi$. Consequently, the same observed distribution can be explained by many distinct and incompatible latent structures. The model is thus underdetermined. This lack of identifiability undermines the interpretability and meaning of the latent factors: if the ground-truth generative process is not uniquely recoverable, then the notion of concepts as fundamental explanatory variables loses its significance. To resolve this ambiguity, additional conditions are necessary to restrict the solution space and render the problem well-posed. We argue, following prior works \citep{buchholz2022function,gresele2021independent}, that what is missing is a notion of independence, not in the statistical sense, but in terms of the functional influence that each factor exerts on the observations. Mathematically, we can formalize this as  



\begin{assumption} \label{ass:1}
Let $\mathbf{J_f}$ denote the Jacobian of the generative function $\mathbf{f}$, and let $\partial\mathbf{f}/\partial z_i$ denote its $i$-th column. Each concept $z_i$ affects the observations through a distinct mode of variation:
\begin{equation}
\forall \mathbf{f}\in\mathcal{F},\quad \forall i\neq j:\quad
\frac{\partial\mathbf{f}}{\partial z_i}^{\!\top}\frac{\partial\mathbf{f}}{\partial z_j}=0.
\end{equation}
The partial derivatives of $\mathbf{f}$ with respect to distinct latent variables are orthogonal at every point. 
Equivalently, 
$\mathbf{J_f}=Q_{\mathbf{f}}D_{\mathbf{f}}$ where $D_{\mathbf{f}}$ is diagonal positive and $Q_{\mathbf{f}}\in\mathbb{R}^{n\times d}$ has orthonormal columns.
Additionally, for any $\mathbf{f},\mathbf{g}\in\mathcal{F}$ related by a diffeomorphism $h$ of their latent domains, $\mathbf{g}=\mathbf{f}\circ h$, the relative frame $R:=Q_{\mathbf{f}}(h)^{\top}Q_{\mathbf{g}}\in O(d)$ has no
purely joint variation: at every point $u$,
\begin{equation}
\label{eq:purejet}
\partial_i^{\,k}R(u)=0\;\;\forall i,k\ge 1 \Rightarrow
\partial^{\alpha}R(u)=0\;\;\forall |\alpha|\ge 1 .
\end{equation}
\end{assumption}

This orthogonality condition enforces that perturbations along different latent dimensions induce locally independent variations in the data space. It provides a geometric form of disentanglement by requiring that the effect of each latent factor on the observations be decoupled from the others. Eq.\ref{eq:purejet} enforces that any interaction between latent directions is already visible when factors are varied one at a time, varying them jointly brings nothing new. A violation would be structure attributable to a group of concepts jointly and to none of them individually, a mechanism belonging to no factor, contradicting ICM. Importantly, this assumption applies to the entire model class $\mathcal{M}$ and is posited as a defining property of meaningful factors. In doing so, it shifts the inductive bias from statistical independence of the latent variables to independence of their functional contributions, which we argue is more aligned with the way humans intuitively define and distinguish concepts.
Among the functions satisfying Asm.\ref{ass:1}, a notable subclass is formed by Möbius conformal maps, with Jacobians of the form $\mathbf{J}_{\mathbf{f}} = \lambda(\mathbf{z}) \mathbf{Q}(\mathbf{z})$, where $\lambda(\mathbf{z})>0$ is a smooth scalar field and $\mathbf{Q}(\mathbf{z})$ has orthonormal columns. Conformal maps preserve local angles. A visualization is provided in Fig.\ref{fig:example}.

\begin{figure}[t]
\begin{center} 
\includegraphics[width=1.\linewidth]{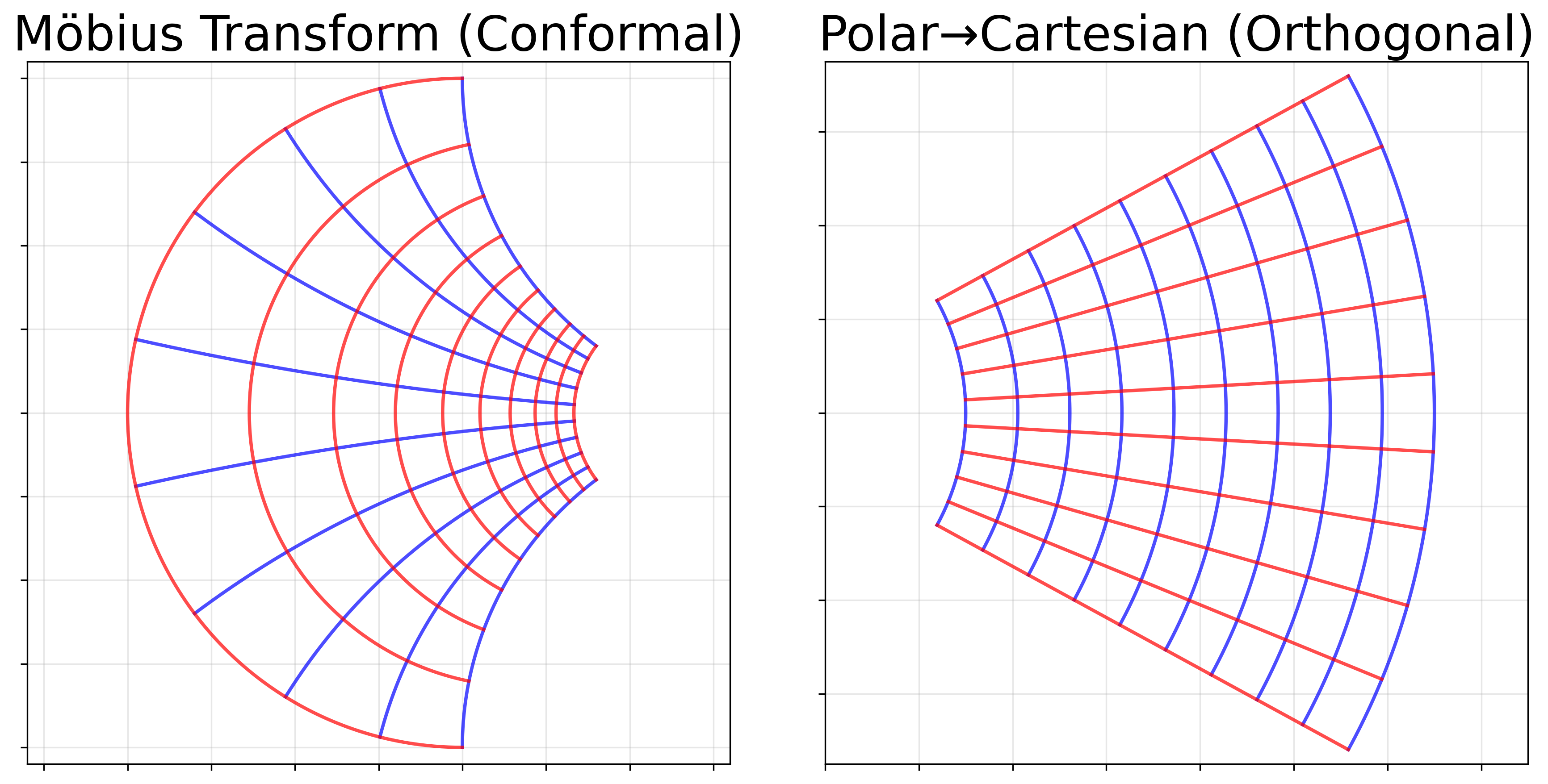}
\end{center}
\caption{Grid Transformations: Visualizing Orthogonal Jacobians.}
\label{fig:example}
\end{figure}

Another way to understand this assumption is through the lens of generative invertibility. When the latent variables $\mathbf{z}$ are constructed recursively via a function $\phi$ from independent noise $\boldsymbol{\epsilon}$, the Jacobian of $\phi$ becomes lower triangular due to the autoregressive structure of the dependencies. In this setting, the full generative mapping from noise to observations is $\mathbf{x} = \mathbf{f} \circ \phi (\boldsymbol{\epsilon})$, and its Jacobian is the product $\mathbf{J}_{\mathbf{f}} \cdot \mathbf{J}_{\phi}$. If $\mathbf{J}_\phi$ is lower triangular and $\mathbf{J}_\mathbf{f}$ has orthogonal columns, then this product is reminiscent of a QR decomposition of an invertible matrix. Since the QR decomposition of an invertible matrix is unique up to signs \citep{golub2013matrix}, this analogy hints at a form of uniqueness in the factorization of the generative process. By encouraging orthogonality in $\mathbf{f}$, we effectively constrain the overall generative model in a way that helps disambiguate the latent representation. This supports the idea that the orthogonality of functional contributions is not merely a heuristic, but a theoretically grounded principle that facilitates identifiability in the absence of statistical independence.

\section{Identifiability Conditions}\label{sec:identifiability}

We now turn to the central theoretical contribution of this work: establishing that the orthogonality constraint on the Jacobian of the generative function yields a uniquely recoverable generative model, up to the usual ambiguities. This result highlights that orthogonality is not merely a convenient regularization, but rather a \textit{fundamental property that defines what constitutes a factor in the generative process}. Without identifiability, multiple entangled latent representations can explain the same observed distribution, rendering the notion of “factors” devoid of semantic meaning. Demonstrating identifiability therefore establishes the conceptual and mathematical significance of orthogonal functional influence as the defining criterion for meaningful disentanglement. In the following, we analyze identifiability in two stages. We first consider the case where the latent factors are statistically independent. We then relax this assumption and extend the analysis to the more general and realistic case where the latent factors may exhibit statistical dependencies.

\subsection{Identifiability Under the Independence Assumption}\label{sec:ind}

We first consider the case where the latent factors are statistically independent. This setting is closely related to classical BSS and ICA, where the goal is to recover the independent sources that generate the observed data. Here, we extend this principle to the nonlinear setting under the orthogonality constraint introduced in Asm.\ref{ass:1}.

\begin{proposition}\label{prop:1}
  Let the observed data $\mathbf{x} \in \mathbb{R}^n$ lie on a $d$-dimensional manifold generated by latent variables $\mathbf{z} \in \mathbb{R}^d$, with $d \ll n$, through a mapping $\mathbf f \in \mathcal F$ satisfying Asm.\ref{ass:1}. Each latent coordinate $z_i$ corresponds to a distinct underlying concept. Suppose further that the latent variables are statistically independent, $p(\mathbf{z})=\prod_i p(\mathbf{z}_i)$, and that at most one component $z_i$ follows a Gaussian distribution. Then, the generative process $(\mathbf{f}, p_{\mathbf{z}})$ is identifiable in sense of Def.\ref{def:1}.
\end{proposition}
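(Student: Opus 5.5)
We compare two models $(\mathbf f,p_{\mathbf z})$ and $(\mathbf g,p_{\hat{\mathbf z}})$ in $\mathcal M$ that have the same observed distribution, and study the map relating their latent spaces. Let $h:=\mathbf g^{-1}\circ\mathbf f$, restricted to the image manifold. This is a real-analytic diffeomorphism between the latent domains, because both maps are invertible onto the common support of $\mathbf x$. Equality of the pushforwards gives $h_*p_{\mathbf z}=p_{\hat{\mathbf z}}$. Since both densities factorize, $h$ is a nonlinear ICA transformation: it maps independent sources to independent sources. The goal is to show that $h$ is a permutation composed with elementwise maps. That is exactly the ambiguity $(\mathbf P,\mathbf t)$ allowed in Def.~\ref{def:1}.

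The first step extracts the constraint that orthogonality places on $\mathbf J_h$. From $\mathbf f=\mathbf g\circ h$ we get $\mathbf J_{\mathbf f}=\mathbf J_{\mathbf g}(h)\mathbf J_h$. Using Asm.~\ref{ass:1}, this becomes $Q_{\mathbf f}D_{\mathbf f}=Q_{\mathbf g}(h)D_{\mathbf g}(h)\mathbf J_h$. Multiplying on the left by $Q_{\mathbf g}(h)^\top$, and noting that both frames span the same tangent space, we obtain
\[
\mathbf J_h(\mathbf z)=D_{\mathbf g}(h(\mathbf z))^{-1}\,R(\mathbf z)\,D_{\mathbf f}(\mathbf z).
\]
Here $R$ is the orthogonal relative frame, equal to the transpose of the $R$ in Asm.~\ref{ass:1}. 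So $\mathbf J_h$ is ``diagonal $\times$ orthogonal $\times$ diagonal'' at every point.

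The second step uses independence. Write $\log p_{\mathbf z}(\mathbf z)=\log p_{\hat{\mathbf z}}(h(\mathbf z))+\log|\det \mathbf J_h(\mathbf z)|$. Both log-densities are sums of univariate terms, and $\log|\det\mathbf J_h|=\sum_i\log D_{\mathbf f,ii}-\sum_i\log D_{\mathbf g,ii}(h)$. We then take mixed second derivatives $\partial_{z_i}\partial_{z_j}$ with $i\neq j$, as in the Hyvärinen--Pajunen / Gresele et al.\ argument. This kills the separable terms of $\log p_{\mathbf z}$ and leaves polynomial identities in the entries of $R$ and their derivatives. The alternative is to use the linearization: at each point, independence of both the sources and the local linear images, together with non-Gaussianity of all but one component (Darmois--Skitovich), forces the local mixing matrix $R$ to be a signed permutation. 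The obstacle is that nonlinear ICA does not reduce pointwise to linear ICA. For that reason I would rather show that $R$ must be locally constant, and then invoke the linear result.

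The third step, which I expect to be the main obstacle, shows that $R$ is constant. Because $R\in O(d)$ and $\mathbf J_h$ has the form above, the integrability condition $\partial_k(\mathbf J_h)_{\cdot i}=\partial_i(\mathbf J_h)_{\cdot k}$ (symmetry of second derivatives of $h$) holds. This ties the derivatives of $R$ to those of the diagonal scalings. I would first try to show that these constraints, combined with the independence identity from the second step, force all pure derivatives $\partial_i^kR$ to vanish along each coordinate line. The independence identity makes cross terms vanish, and the orthogonality of $R$ makes $R^\top\partial_iR$ skew, hence zero once its symmetric part is controlled. Then the no-pure-joint-variation condition~\eqref{eq:purejet} upgrades this to $\partial^\alpha R=0$ for all $|\alpha|\ge1$. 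By analyticity and simple connectedness of the support, $R$ is constant on the whole domain.

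With $R$ constant, the relation $h_*p_{\mathbf z}=p_{\hat{\mathbf z}}$ together with the form of $\mathbf J_h$ reduces, in quantile coordinates, to a linear map between independent variables. Here we use the assumed analytic, non-degenerate extension of $\mathbf f$ to the closure of the domain. Darmois--Skitovich, with at most one Gaussian component, then forces $R$ to be a signed permutation $\mathbf P$. Consequently $\mathbf J_h=D_{\mathbf g}^{-1}\mathbf P D_{\mathbf f}$ has exactly one nonzero entry per row and column, in a fixed pattern. Connectedness of the support then implies $h=\mathbf P\circ\mathbf t$ with $\mathbf t$ elementwise. This gives $\mathbf f=\mathbf g\circ\mathbf P\circ\mathbf t$ and $p_{\hat{\mathbf z}}=(\mathbf P\circ\mathbf t)_*p_{\mathbf z}$, which is identifiability in the sense of Def.~\ref{def:1}, after inverting the roles as written there.
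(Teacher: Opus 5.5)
\textbf{There is a genuine gap, in your third step.} Your first step is the paper's Preliminary~1, and your last step (non-Gaussianity removing a residual constant rotation) matches Steps~5--6 of the paper. The third step is where the whole proof lives, and it does not go through.

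To invoke Eq.~\eqref{eq:purejet} you need a seed point at which $\partial_i^kR=0$ for all $i$ and all $k\ge1$. You propose to get it from interior identities: integrability of $\mathbf J_h$ and vanishing mixed second derivatives of the log-density relation. No argument of that kind can work, as the following example shows.
\begin{itemize}[nolistsep,nosep]
\item Take the identity chart $(x,y)\mapsto(x,y)$ and the polar chart $(r,\theta)\mapsto(r\cos\theta,r\sin\theta)$ on the box $(1,2)\times(0,1)$. Both have orthogonal Jacobian columns.
\item The transition has Jacobian $\mathrm{Rot}(\theta)\,\mathrm{diag}(1,r)$, so the relative frame is $R=\mathrm{Rot}(\theta)$.
\item The independent density $\hat p(r,\theta)\propto r^3\cos\theta\sin\theta$ on the box is carried to $p(x,y)\propto xy$.
\item Both log-densities are sums of univariate terms on open sets, and integrability holds because the transition is a genuine map.
\item Eq.~\eqref{eq:purejet} holds vacuously, yet $\partial_\theta R\neq0$ at every point.
\end{itemize}
This configuration is excluded only because the image of the box is an annular sector rather than a product set. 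That is global support information, which pointwise interior identities never see.

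Your remark that $R^\top\partial_iR$ is skew, ``hence zero once its symmetric part is controlled'', is also vacuous. A skew matrix has zero symmetric part automatically, and the entire difficulty lies in its skew part.

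\textbf{The missing idea is the boundary rigidity of Preliminary~2.} It is driven by the regularity condition, which you invoke only at the end and for a different purpose. The paper's argument runs as follows.
\begin{enumerate}[nolistsep,nosep]
\item Pass to marginal-CDF coordinates and align by admissible rotations. The transition $\psi$ then extends to an analytic, non-degenerate homeomorphism of the closed cube $[0,1]^d$, so it maps faces to faces, edges to edges and corners to corners.
\item On a face $F_i^\pm\to F_k^\pm$, the $k$-th row of $J_\psi$ annihilates the tangent directions. This pins the $i$-th column of $R$ to $\pm e_k$.
\item On an edge, $d-1$ columns are pinned, so $R$ is a signed permutation and, by continuity, constant along the edge.
\item At a corner, the $d$ edges meeting there give vanishing pure jets of every order.
\item Only this seed point lets Eq.~\eqref{eq:purejet}, together with analyticity on a neighborhood of $[0,1]^d$ and connectedness, yield $R\equiv\mathbf P$.
\end{enumerate}

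\textbf{A second, smaller gap.} Your reduction ``$R$ constant $\Rightarrow$ a linear map in quantile coordinates'' is not automatic. With a constant non-permutation $R$, the diagonal factors may depend on all coordinates, and $D^{-1}RD'$ need not integrate to reparameterization$\circ R\circ$reparameterization. In the paper, the corner forces the constant to be a signed permutation, so the sparsity of $J_\psi$ gives the coordinate-wise form directly. A genuine rotation reappears only through the admissible rotations of the latent laws, and that is where the at-most-one-Gaussian assumption is used.
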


The complete proof is provided in App.\ref{app:A}. Intuitively, we can illustrate it in the special case where $\mathbf{f}$ is conformal and show that identifiability follows directly. Indeed, using the same notations as in Def.\ref{def:1}, denote $\mathbf{h}=\mathbf{f}^{-1}\circ\mathbf{g}$ the function mapping $\hat{\mathbf{z}}$ to $\mathbf{z}$. Independence implies $\det \mathbf{J}{\mathbf{h}} = 1$ up to a coordinate-wise reparameterization. This, combined with the conformal structure of $\mathbf{h}$ (composition of conformal mappings), forces $\mathbf{h}$ to be both angle and volume preserving. Hence, $\mathbf{h}$ reduces to a rigid transformation (rotation and translation), yielding the same identifiability condition as in linear ICA. This provides a simple and direct proof of identifiability in the conformal case compared to \cite{buchholz2022function} or \cite{zheng2022identifiability}. 

More generally, Prop.\ref{prop:1} shows that the orthogonality constraint ensures identifiability even beyond the conformal case for general orthogonal Jacobians. However, that simple proof reveals a conceptual link between our framework and classical ICA : both aim to recover latent directions that are statistically independent and Asm.\ref{ass:1} naturally collapses the nonlinear problem to the identifiable linear ICA regime, recovering the condition that identifiability holds when at most one source is Gaussian. This is consistent with prior results in \cite{gresele2021independent} for restricted functional classes and explains why the same conclusions as ICA apply. In this sense, Asm.\ref{ass:1} generalizes the linear orthogonality condition underlying ICA to the nonlinear regime. This connection also relates to Principal Component Analysis (PCA) \citep{abdi2010principal}, where orthogonality plays a central role, the orthogonal Jacobian assumption can be interpreted as a local, nonlinear analogue of PCA, while statistical independence extends it globally to ICA. 

Overall, this result goes beyond SoTA, as it directly challenges the widely accepted impossibility result which asserts that unsupervised disentanglement is unachievable \citep{hyvarinen1999nonlinear,locatello2019challenging}. Our findings reveal that, when the generative process satisfies a functional independence condition, the generative process becomes identifiable even in the absence of supervision. This helps explain why methods such as VAEs and other architectures implicitly or explicitly enforcing local orthogonality often succeed empirically in disentangling latent factors, particularly on synthetic or structured datasets that directly satisfy the assumptions. 

Finally, Prop.\ref{prop:1} emphasizes the dual nature of the identifiability problem:
(1) a \textbf{local} aspect, governed by the Jacobian orthogonality that determines how latent factors influence the observations; and
(2) a \textbf{global} aspect, governed by the distribution $p(\mathbf{z})$ that constrains how these factors are statistically organized.
Together, these complementary constraints define a principled pathway to achieving meaningful and identifiable disentanglement.

\subsection{Identifiability Under Dependent Factors}\label{sec:dep}

In practice, and as discussed in the introduction, the assumption of statistically independent latent factors rarely holds, either due to inherent relationships or spurious correlations in the dataset. It is therefore essential to study identifiability  when factors are potentially dependent. In this setting, the data $\mathbf{x}$ is still generated by a smooth and invertible mapping $\mathbf{f}:\mathbb{R}^d \rightarrow \mathbb{R}^n$ satisfying Asm.\ref{ass:1}, but the latent variables $\mathbf{z}$ no longer follow a factorized distribution. Instead, their joint distribution can be modeled recursively as $p(\mathbf{z}) = \prod_{i=1}^{d} p(z_i \mid z_{<i})$
which can equivalently be expressed through a triangular transformation $\phi:\mathbb{R}^d \rightarrow \mathbb{R}^d$ of independent noise variables $\boldsymbol{\epsilon}$, i.e., $\mathbf{z} = \phi(\boldsymbol{\epsilon})$.
Importantly, we do not impose any causal interpretation on $\phi$; it merely serves as a generic mechanism to model dependent factors.

However, in this more general case, identifiability is no longer guaranteed. As established in Prop.\ref{prop:1}, identifiability arises from the interaction of two complementary constraints: a \textit{local geometric constraint} and a \textit{global statistical constraint}. When independence is removed, the global constraint disappears, leaving the model underdetermined. Indeed, a trivial counterexample illustrates this failure: consider $\mathbf{f}$ as the identity mapping, which trivially satisfies the orthogonality condition. In this case, $\mathbf{x} = \mathbf{f}(\mathbf{z}) = \mathbf{z}$, and the overall generative model $\mathbf{x} = \mathbf{f} \circ \phi (\boldsymbol{\epsilon})$ can represent any data distribution $p(\mathbf{x})$ by appropriately choosing $\phi$ which can represent any distribution e.g., with the Darmois construction \citep{darmois1953analyse}. Thus, the model perfectly explains the data without uncovering any meaningful latent structure.
This demonstrates that, in the absence of additional constraints, the concept of latent factors becomes vacuous: if multiple equally valid generative decompositions exist, then the notion of disentanglement loses its interpretability. An illustration of this degeneracy is provided in Fig.\ref{fig:football}.

\begin{figure}[t]
\begin{center} 
\includegraphics[width=1.\linewidth]{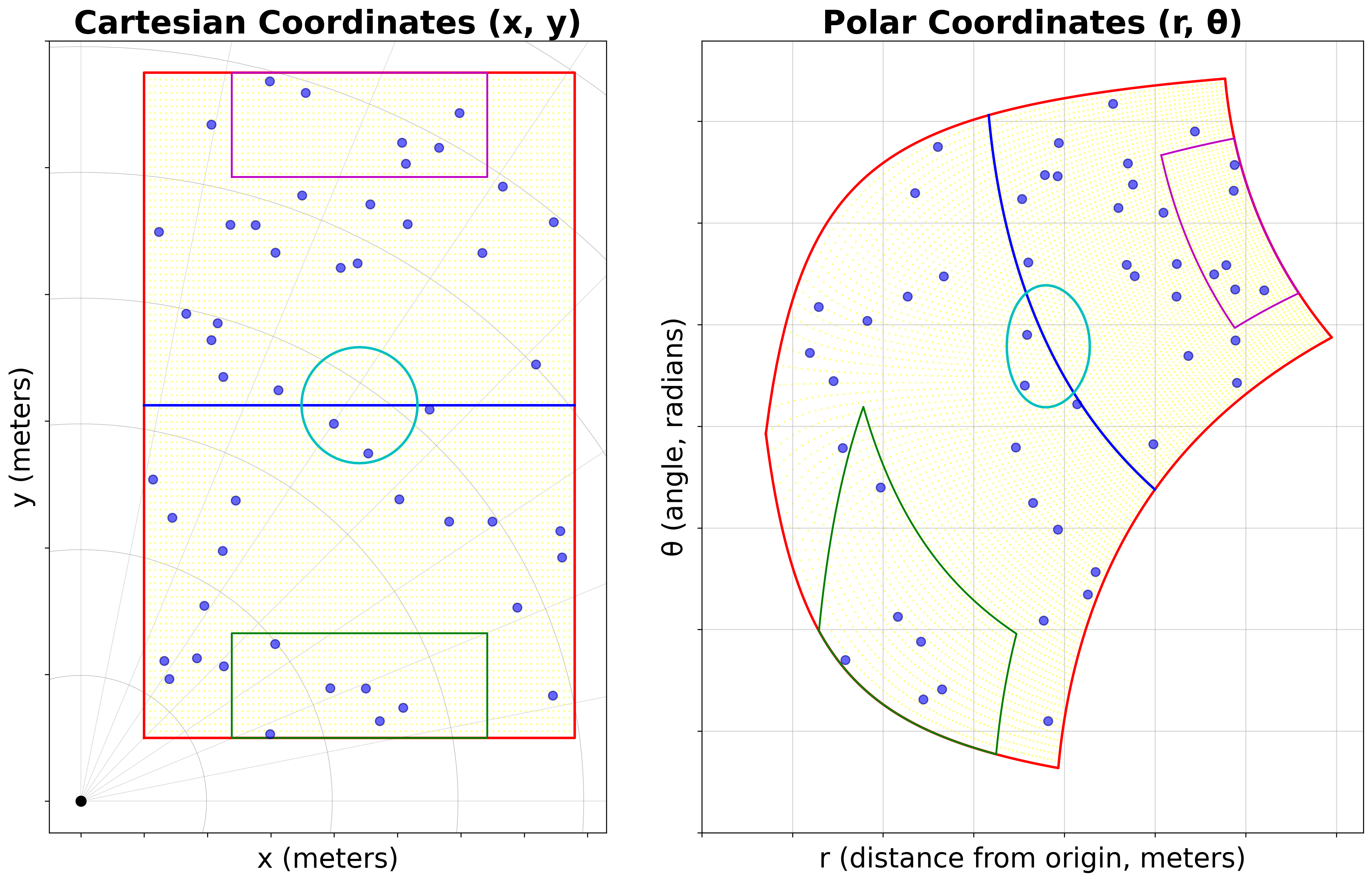}
\end{center}
\caption{Assume we aim to represent the position of a ball on a field as a two-dimensional latent variable. The most direct representation uses Cartesian coordinates $(x,y)$ (left), regardless of the distribution of positions. However, an equally valid alternative could be a polar representation $(r,\theta)$ (right). Without additional assumptions, there is no principled way to identify which representation is the “true” one: $(r,\theta)$ is simply an entangled reparameterization of $(x,y)$. Under an independence assumption, however, a privileged representation emerges; for example, if positions are uniformly distributed over the field, this constraint favors the Cartesian coordinates $(x,y)$. In the absence of such assumptions, these entangled representations are indistinguishable. Nevertheless, we argue that even without independence, certain representations are preferable. Specifically, those in which all combinations of factor values can in principle occur, as with the $(x,y)$ coordinates in this example (Asm.\ref{ass:2}).}
\label{fig:football}
\end{figure}

A natural way to restore identifiability would be to incorporate auxiliary variables or weak supervision, such as labels or side information, to anchor the latent space and help align the learned axes with meaningful factors \citep{khemakhem2020variational}. However, such information is not always available and introduces an undesirable degree of supervision.

These limitations motivate the search for a weaker, yet sufficiently informative, \textit{global} constraint that can complement the \textit{local} orthogonality condition. The key intuition is that orthogonality of the Jacobian already enforces strong geometric rigidity. Consequently, even limited global information about the latent domain may suffice to fully determine the mapping. In the theory of conformal mappings for dimensions $d>2$, Liouville’s theorem \citep{blair2000inversion} implies that any conformal transformation of $\mathbb{R}^d$ must be a Möbius transformation, i.e., a composition of translations, rotations, dilations, and inversions. When the data lies on a bounded, simply connected domain $\Omega \subset \mathbb{R}^d$ that is not invariant under inversion (a property satisfied by a broad class of supports, including all polytopes and most bounded smooth domains) non-rigid Möbius transformations are ruled out. This implies that, \textit{when the latent support is known, the generative process becomes identifiable, up to the natural rigid symmetries of that support}. Formal statements and illustrative examples are provided in App.\ref{app:B}, where we also discuss the special case $d=2$ via the Riemann Mapping Theorem.

Of course, such detailed geometric knowledge of the latent domain is rarely known a priori. Nevertheless, the above discussion suggests that identifiability can be recovered by introducing a structural assumption on the latent distribution that serves a similar purpose, thereby providing the required global constraint.

\begin{assumption}\label{ass:2}
    The latent variables $\mathbf{z}\in\mathbb{R}^d$ follow a distribution $p_{\mathbf{z}}$ supported on a bounded, simply connected region $\Omega \subset \mathbb{R}^d$ such that, up to a continuous strictly monotone coordinate-wise reparameterization, $\Omega$ is a hypercube. Equivalently, The copula associated with $p_{\mathbf{z}}$ has full support on $[0,1]^d$.
\end{assumption}

Intuitively, Asm.\ref{ass:2} states that all combinations of latent factors are possible, that is, each latent coordinate can vary within its range, regardless of the values taken by the others. One may think of the latent dimensions as concept “sliders” that can be adjusted to produce every possible combination of factors. This implies that each factor corresponds to a distinct, manipulable degree of freedom. For example, consider the concepts of altitude and temperature. Although these quantities are correlated in nature, it is perfectly possible to observe configurations that break this dependency (e.g., a heated room at high altitude). The ability to realize or observe all such combinations is precisely what gives meaning to the notion of separate factors. Hence, Asm.\ref{ass:2} can be viewed as a non-causal analog of interventions in causal models, ensuring that the dataset is rich enough to cover the joint range of the latent variables. A detailed discussion of both assumptions is provided in App.\ref{app:dis_assumptions}.

Interestingly, if the factors are statistically independent and supported on a bounded domain, Asm.\ref{ass:2} is fulfilled and thus this assumption is a direct generalization of the case of independent factors for bounded latent spaces.
Critically, without such a global constraint, many different valid representations may satisfy the local functional independence enforced by an orthogonal Jacobian. Among these representations, however, there typically exists a \emph{preferential} one that aligns with the combinatorial structure of the latent domain; this phenomenon is illustrated in Fig.\ref{fig:football}. Using this conclusion, we obtain the following result.

\begin{proposition}\label{prop:2}
    Let the observed data $\mathbf{x} \in \mathbb{R}^n$ lie on a $d$-dimensional manifold generated by latent variables $\mathbf{z} \in \mathbb{R}^d$, with $d \ll n$, through a mapping $\mathbf f \in \mathcal F$ satisfying Asm.\ref{ass:1}. Suppose further that the latent distribution $p_{\mathbf{z}}$ satisfies Asm.\ref{ass:2}. Then, the generative process $(\mathbf{f}, p_{\mathbf{z}})$ is identifiable in the sense of Def.\ref{def:1}.
\end{proposition}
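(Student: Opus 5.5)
The argument studies the transition map between two models. Take $(\mathbf f,p_{\mathbf z}),(\mathbf g,p_{\hat{\mathbf z}})\in\mathcal M$ with $\mathbf f_*p_{\mathbf z}=\mathbf g_*p_{\hat{\mathbf z}}$. Both maps are injective onto the same image manifold, so $\mathbf h:=\mathbf f^{-1}\circ\mathbf g$ is a real-analytic diffeomorphism from $\mathrm{supp}\,p_{\hat{\mathbf z}}$ onto $\mathrm{supp}\,p_{\mathbf z}$ with $\mathbf h_*p_{\hat{\mathbf z}}=p_{\mathbf z}$.

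\textbf{Step 1: reduce to cube-to-cube maps.} By Asm.\ref{ass:2}, we can apply coordinate-wise strictly monotone reparameterizations (the copula/quantile coordinates) on both sides so that both supports become $[0,1]^d$. These reparameterizations are element-wise, so they are absorbed into the ambiguity $\mathbf t$ of Def.\ref{def:1}. By the regularity clause in the problem formulation, $\mathbf f$ and $\mathbf g$ extend analytically and non-degenerately to the closed cube.

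Coordinate-wise reparameterizations preserve column-orthogonality of the Jacobian. They only rescale the diagonal factor $D$. Hence $\mathbf f$ and $\mathbf g$ still satisfy Asm.\ref{ass:1}, and $\mathbf h$ is now an analytic diffeomorphism of the closed cube.

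\textbf{Step 2: the local constraint on $\mathbf h$.} From $\mathbf g=\mathbf f\circ\mathbf h$ we get
\[
Q_{\mathbf g}D_{\mathbf g}=Q_{\mathbf f}(\mathbf h)\,D_{\mathbf f}(\mathbf h)\,\mathbf J_{\mathbf h},
\]
so
\[
\mathbf J_{\mathbf h}=D_{\mathbf f}(\mathbf h)^{-1}R\,D_{\mathbf g},\qquad R=Q_{\mathbf f}(\mathbf h)^\top Q_{\mathbf g}\in O(d).
\]
The map $\mathbf h$ therefore sends the orthogonal coordinate net of the $\hat{\mathbf z}$-cube to a net that is orthogonal for the pulled-back metric $\mathbf J_{\mathbf f}^\top\mathbf J_{\mathbf f}=D_{\mathbf f}^2$. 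Both metrics are diagonal, and the frame $R$ relates them.

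The goal is to show $R$ is locally constant and a signed permutation. Then $\mathbf J_{\mathbf h}$ is a permuted diagonal matrix everywhere, and on a connected domain $\mathbf h$ must be $\mathbf P\circ\mathbf t$, which is exactly the identifiability claim.

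\textbf{Step 3: the global boundary constraint.} A diffeomorphism of the closed cube maps faces to faces, edges to edges, and corners to corners, because it preserves the stratification by boundary regularity. Since it is analytic up to the boundary and non-degenerate there, it maps each codimension-one face into a face. Each face $\{\hat z_i=c\}$ has normal $e_i$.

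Consider the face $\{\hat z_i=0\}$, which is mapped into some face $\{z_{\pi(i)}=0\}$. The tangent space of the face is spanned by $\{e_j\}_{j\neq i}$ and must be mapped into $\mathrm{span}\{e_k\}_{k\neq\pi(i)}$. With the diagonal structure above, this forces $R_{\pi(i),j}=0$ for $j\neq i$ along that face. Orthogonality of $R$ then gives $R e_i=\pm e_{\pi(i)}$ on the face.

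Differentiating along the face, and by analyticity, all pure derivatives $\partial_j^kR$ in face-tangent directions vanish there. Combining the faces through a corner controls every coordinate direction. I then want to invoke the no-pure-joint-variation condition \eqref{eq:purejet} to conclude that all derivatives of $R$ vanish at a corner. Since $R$ is real-analytic, it is then constant, equal to a signed permutation, on the whole connected domain.

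\textbf{Main obstacle.} The hard step is this last propagation. Face constraints only give tangential pure derivatives on each face. Obtaining $\partial_i^kR=0$ in the normal direction $i$ at a corner requires combining information from the other faces meeting at that corner, where direction $i$ is tangent. I would try this first at a vertex, where all $d$ faces meet and every coordinate direction is tangent to $d-1$ of them. There all pure derivatives $\partial_i^kR$ vanish, \eqref{eq:purejet} kills the mixed ones, and analytic continuation finishes the argument.

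Finally, the density is transported as $(\mathbf P\circ\mathbf t)_*p_{\hat{\mathbf z}}$, after undoing the Step 1 reparameterizations. Prop.\ref{prop:1} is not needed; only the factorization argument from its proof is reused.
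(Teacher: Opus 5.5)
Your overall route is the paper's: the Preliminary~1 factorization, faces mapped to faces, the normal column pinned on each face, pure jets at a vertex, Eq.~\eqref{eq:purejet}, and analytic propagation. However, one step is false as stated, and your vertex argument rests on it.

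\textbf{The false step.} On a face $\{\hat z_i=0\}$ the tangency constraint fixes only the $i$-th column, $Re_i=\pm e_{\pi(i)}$. The other $d-1$ columns are merely an orthonormal frame of $e_{\pi(i)}^{\perp}$, and this $O(d-1)$-valued block can vary freely over the face. Indeed, the restriction of $\mathbf h$ to the face is again just a diagonal--orthogonal--diagonal map, with nothing pinning it in the interior of the face. So for $d\ge 3$ it is not true that all tangential pure derivatives $\partial_j^kR$ vanish on a face; only those of the $i$-th column do. At a vertex, direction $i$ is tangent to the $d-1$ faces $\{\hat z_j=c_j\}$, $j\neq i$, but this only yields $\partial_i^k(Re_j)=0$ for $j\neq i$. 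Nothing in your argument controls $\partial_i^k(Re_i)$, the derivative of the $i$-th column in the direction normal to its own face.

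\textbf{The missing ingredient} is the paper's edge step. The edge through the vertex in direction $i$ lies in all $d-1$ faces $j\neq i$. Along it, the columns $Re_j$, $j\neq i$, are pinned to distinct signed basis vectors. The last column, a unit vector orthogonal to all of them, is then forced to be $\pm$ the remaining basis vector. Hence $R$ is a signed permutation at every point of the edge. Being continuous with values in a discrete set, it is constant there. Only then do all pure derivatives $\partial_i^kR$ vanish at the vertex: one-sided along the edge, and two-sided thanks to the analytic extension to a neighborhood of $[0,1]^d$.

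With this inserted, the rest of your plan coincides with the paper's proof: Eq.~\eqref{eq:purejet} kills the mixed jets, the Taylor series fixes $R$ on a ball, the open--closed argument propagates this over the connected domain, and $\mathbf J_{\mathbf h}$ becomes a permuted diagonal matrix so that $\mathbf h=\mathbf P\circ\mathbf t$. (Minor: $\{\hat z_i=0\}$ may land on $\{z_{\pi(i)}=1\}$ rather than $\{z_{\pi(i)}=0\}$; this only affects signs.)
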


The proof is given in App.\ref{app:A}. Together with the local orthogonality condition, Asm.\ref{ass:2} ensures a unique alignment. This result can be interpreted as a general principle:

\textit{Identifiability, and hence the very notion of a “concept” or “factor”, emerges from the dual constraints that define it:}
\begin{enumerate}[nolistsep,nosep]
\item \textbf{Local constraint (Assumption \ref{ass:1}):} factors influence the data through orthogonal directions, capturing functional independence in the generative mechanism;
\item \textbf{Global constraint (Assumption \ref{ass:2}):} the latent domain allows all possible factor combinations;each factor corresponds to a distinct manipulable degree of freedom.
\end{enumerate}
Once these defining properties of latent factors are in place, identifiability follows naturally. This observation forms the central insight of our theoretical framework: \textit{disentanglement is not an artifact of independence or causality, but a structural property emerging from the interplay between local orthogonality and global combinatorial completeness.}

We next turn to the empirical validation of these results.

\section{Experiments}\label{sec:exp}

Theoretical Prop.\ref{prop:1} and \ref{prop:2} establish that, under orthogonal functional influence, the generative model becomes identifiable. In this section, we empirically validate these results by investigating whether models trained solely on observations $\mathbf{x}$ can recover the underlying latent concepts $\mathbf{z}$ up to the admissible ambiguities specified in Def.\ref{def:1}. Concretely, we compare models trained with and without an orthogonality constraint on the Jacobian of the generative mapping and evaluate their ability to disentangle latent factors. Details on the experiments are provided in App.\ref{app:C}.

\textbf{Data Generation} : We consider two regimes for the latent variables $\mathbf{z} \in \mathbb{R}^d$, with $d \in \{3,6,9\}$ : (i) Independent factors, latent variables are sampled independently from a uniform distribution on $[0,1]^d$ (ii) Dependent factors, latent variables are generated using a flexible \textit{Normalizing Flow} (NF) \citep{kobyzev2020normalizing}, that induces strong statistical dependencies while ensuring the resulting copula has full support on $[0,1]^d$. This construction explicitly violates independence while satisfying Asm.\ref{ass:2}.
In both cases, latent variables are mapped to observations via a smooth and invertible ground-truth mixing 
$\mathbf{f}$. We consider two classes of generative mappings:
(i) conformal Möbius transformations; and
(ii) more general nonlinear mappings with orthogonal Jacobians of the form $\mathbf{J}_{\mathbf{f}} = \mathbf{Q}(\mathbf{z})\mathbf{D}(\mathbf{z})$, obtained by composing conformal and non-conformal transformations. This second class goes strictly beyond conformal mappings and tests identifiability under more general orthogonal Jacobians.

\vspace{-.4mm}
\textbf{Models} : To model the data distribution $p(\mathbf{x})$, we train expressive Residual Flows \citep{chen2019residual} with full Jacobians. We learn both \textit{Unconstrained baselines} with no structural constraints on the Jacobian and \textit{Orthogonally constrained models} using the $C_\text{IMA}$ loss introduced by \cite{gresele2021independent} to regularize the maximum likelihood objective, enforcing Asm.\ref{ass:1}. In the independent setting, the base distribution is a standard Gaussian with diagonal covariance. In the dependent setting, the base distribution is itself learned using a RealNVP flow \citep{dinh2016density} with a triangular Jacobian, ensuring a full-support copula while modeling dependencies between latent variables.

\vspace{-.4mm}
\textbf{Evaluation Metrics} :  For evaluation, we compute (i) the KL divergence between the learned model and the true data-generating distribution to assess density estimation quality; (ii) the \textit{Mean Correlation Coefficient} (MCC) between the recovered and ground-truth latent variables \citep{khemakhem2020variational}; and (iii) a nonlinear extension of the Amari distance \citep{gresele2021independent} between the true mixing and the learned unmixing. The latter
equals zero if and only if the model is identifiable in the sense of Def.\ref{def:1}. As all models managed to fit the data and have equivalent KL divergence metrics we only report the MCC and Amari distance to measure disentanglement.

\begin{figure*}[ht]
\begin{center} 
\includegraphics[width=1.\linewidth]{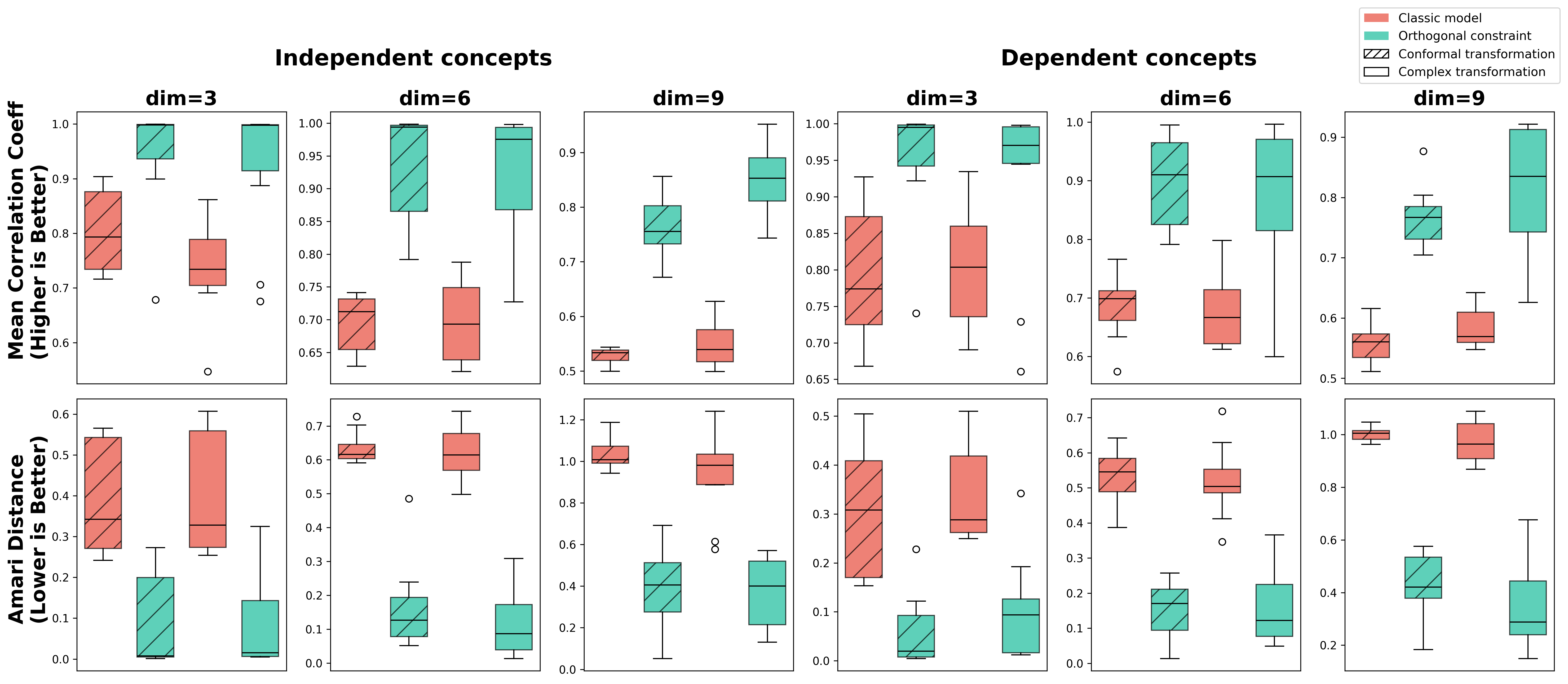}
\end{center}
\caption{MCC and Amari distances quantitative results for $d \in \{3,6,9\}$ in the independent/dependent cases.}
\label{fig:boxplots}
\end{figure*}

\begin{figure}[ht]
\begin{center} 
\includegraphics[width=1.\linewidth]{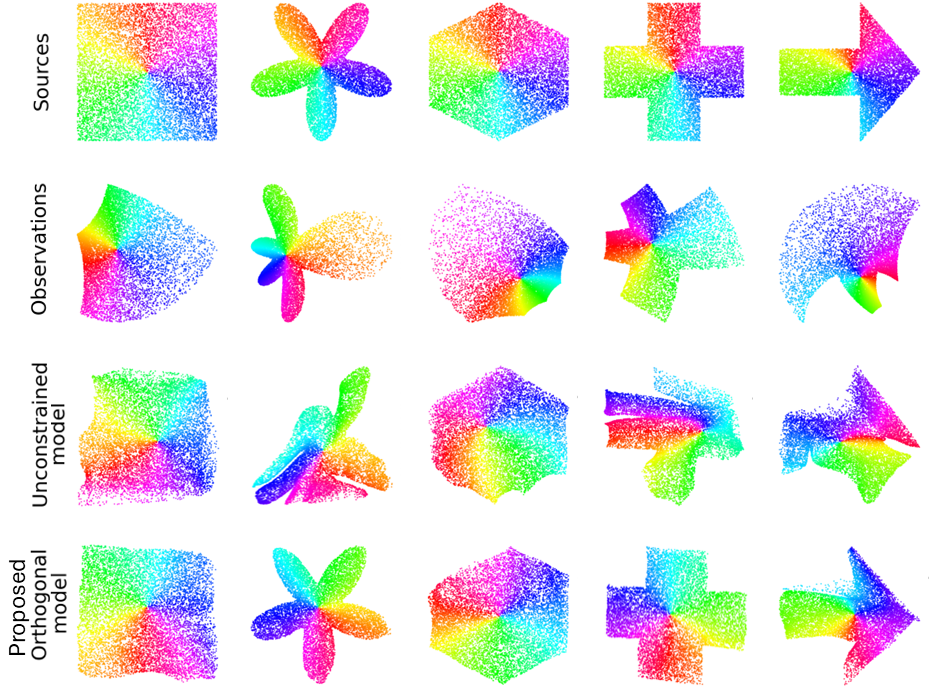}
\end{center}
\caption{Qualitative results under various sources domain.}
\label{fig:conformal}
\end{figure}

\vspace{-.4mm}
\textbf{Results} : Fig.\ref{fig:boxplots} reports MCC and Amari distances for unconstrained and orthogonally constrained models, across independent and dependent latent variables, for both conformal and general $\mathbf{Q}\mathbf{D}$ generative mappings. For each configuration, 30 independent runs are performed, yielding a total of 720 trained models. As expected, standard NFs consistently fail to recover the latent factors despite achieving good likelihoods, confirming that the problem remains underdetermined. In contrast, introducing the orthogonality constraint leads to substantial improvements in both MCC and Amari distance, often approaching perfect identifiability. These results empirically validate Prop.\ref{prop:1} and \ref{prop:2}. Notably, while prior work largely focused on conformal mappings, our results demonstrate that identifiability extends to significantly more general nonlinear transformations with orthogonal Jacobians. Strikingly, performance is also comparable in the independent and dependent settings. This provides strong empirical evidence that disentanglement does not fundamentally rely on statistical independence or causal assumptions. Instead, meaningful and identifiable representations of dependent concepts can be recovered purely from orthogonal functional influence, supporting the central thesis of this work. Additional results and qualitative visualizations are provided in App.\ref{app:D}, including comparison with VAEs across standard disentanglement metrics.

Finally, Fig.\ref{fig:conformal} presents some qualitative results illustrating the role of global domain information. We generate data with varying latent supports and train a Neural Spline Flow \citep{durkan2019neural} equipped with orthogonal Jacobian constraints. Consistent with the discussion in Sec.\ref{sec:dep}, the learned representations are identifiable up to the rigid symmetries of the latent domain when its structure is known. This behavior aligns with the theoretical analysis in App.\ref{app:B} and further illustrates the interplay between local orthogonality and global constraints.

Overall, these experiments demonstrate that enforcing orthogonal functional influence is sufficient to enable unsupervised disentanglement. The empirical findings closely match the theoretical predictions, providing strong support for the proposed notion of concepts.

\section{Discussion and Perspectives}\label{sec:discussion}

We now discuss some broader implications of our findings and outline several promising directions for future work.

\textbf{Causal Representation} : 
Throughout the paper, we have drawn connections to causal representation learning, whose aim is to recover latent factors together with their causal structure. 
However, in many applications, enforcing a full causal model is often unrealistic: the causal relations between factors are unclear or not identifiable from observational data alone. 
Our results show that meaningful factors can nevertheless be recovered by leveraging causally inspired principles, such as ICM, without committing to an explicit causal graph.
This suggests an alternative perspective on causal representation learning, in which causal assumptions are relaxed to prioritize factor recovery. If causal structure is ultimately required, it can be inferred \emph{after} disentanglement, using additional assumptions, domain knowledge, or limited interventions, effectively deferring causal resolution to a second stage. This “best effort with the available data” philosophy aligns with the practical reality of many machine-learning applications, where full causal supervision is rarely accessible. Overall, our results emphasize that it is neither statistical independence nor causal directionality that fundamentally defines a latent factor.

Importantly, our generative model remains compatible with causal analysis when such structure is desired. In our framework, the observation map $\mathbf{f}$ has an orthogonal Jacobian, while dependencies between latent variables are modeled via a triangular transformation $\phi$. In this setting, results by \citet{xi2023triangular} imply that such models recover representations up to the Markov Equivalence Class. Consequently, our approach can be viewed as a strong starting point for causal discovery, with causal ordering potentially resolvable using only limited interventional information.

\textbf{VAE Disentanglement} : Finally, we revisit \textit{Variational Autoencoders} (VAEs) through the lens of our framework. VAEs are among the most widely used models for unsupervised disentanglement. A VAE learns a latent variable model by jointly optimizing a reconstruction term, $-\log p(x \mid z)$, and a regularization term enforcing the approximate posterior $q(z \mid x)$ to match a factorized Gaussian prior $p(z)=\mathcal{N}(0,I)$. A well-established belief is that the factorized prior is the primary mechanism responsible for disentanglement. Works such as \citep{chen2018isolating, kim2018disentangling} formalize this viewpoint by decomposing the evidence lower bound (ELBO) as
\begin{align} \label{eq:VAE}
    \mathcal{L}_{\text{VAE}} &= -\log p(x \mid z) + \mathrm{KL}(q(z\mid x)\,\|\,p(z)) \\
    &= \underbrace{-\log p(x\mid z)}_{\text{Reconstruction}}+ \underbrace{\mathrm{KL}\!\left(q(z)\,\big\|\,\prod\nolimits_j q(z_j)\right)}_{\text{Total Correlation}} \notag \\ 
    &+ \underbrace{\sum\nolimits_j \mathrm{KL}(q(z_j)\,\|\,p(z_j))}_{\text{Dimension-wise KL}} + \underbrace{I(x;z)}_{\text{Mutual Information}}.
\end{align}
The dimension-wise KL encourages each latent dimension to match a univariate Gaussian, hence simply affecting the reparameterization, and the mutual-information term is undesirable. In this decomposition, the total correlation term penalizes statistical dependence between latent coordinates and is therefore viewed as the component driving disentanglement. However, this explanation is incomplete. Indeed, we show here below that models such as Normalizing Flows share exactly the same total-correlation penalty while they do \textbf{not} generally produce disentangled representations.

Consider a NF, which learns an invertible mapping $\mathbf{h}$ s.t.
\begin{equation}
    \log p(x) = \log p(z) + \log\bigl|\det(\mathrm{d}h(x)/\mathrm{d}x)\bigr|,
\end{equation}
with $p(z)=\mathcal{N}(0,I)$. Since the pushforward distribution $q(z)$ induced by $h$ does not necessarily match $p(z)$, training effectively minimizes cross-entropy: 
\begin{align}\label{eq:NF}
    \mathcal{L}_{\text{NF}}
        &= -\log q(z)
           + \mathrm{KL}(q(z)\,\|\,p(z))
           - \log\bigl|\det(J)\bigr| \\
        &= -\log p(x) + \underbrace{\mathrm{KL}\!\left(q(z)\,\big\|\,\prod\nolimits_j q(z_j)\right)}_{\text{Total Correlation}} \notag \\ 
        & + \underbrace{\sum\nolimits_j \mathrm{KL}(q(z_j)\,\|\,p(z_j))}_{\text{Dimension-wise KL}} .
\end{align}
Thus, flows exhibit the same global independence pressure as VAEs. Yet, in practice, flows remain non-disentangled in their standard form \citep{dinh2016density, zhai2024normalizing}. This discrepancy highlights that factorized priors, or equivalently, total-correlation minimization, are insufficient to explain VAE's natural bias toward disentanglement.

\vspace{-1mm}
The crucial missing ingredient is the \emph{factorized approximate posterior} which is often overlooked. VAEs enforce \emph{both}:
\begin{enumerate}[nolistsep,nosep]
    \item a factorized prior $p(z)$, promoting independence across data points (global structure), and
    \item a diagonal-covariance approximate posterior $q(z\mid x)=\mathcal{N}(\mu,\sigma^{2}I)$, promoting locally axis-aligned variations within each data point, hence orthogonality.
\end{enumerate}
The fact that the factorized posterior implicitly induces an orthogonal Jacobian of the encoder has been formally proven in \citep{reizinger2022embrace,allen2024unpicking}. Consequently, disentanglement in VAEs arises not merely from independence constraints but from the \emph{joint} imposition of (i) global independence via the prior and (ii) local orthogonality via the posterior. This dual constraint aligns precisely with the structure promoted by our framework. It also explains why VAEs succeed on datasets with well-separated generative factors, typically synthetic data, and fail on more complex settings, e.g., with dependent structure.

\begin{table*}[t]
\centering
\caption{Disentanglement performance averaged over 10 random seeds across datasets and models.}
\renewcommand{\arraystretch}{0.91}
\begin{tabular}{llccc|c}
\toprule
 & & \multicolumn{3}{c|}{Standard VAE models W/ Ortho. J.} & W/O Ortho. J. \\
\cmidrule(lr){3-5} \cmidrule(lr){6-6}
Dataset & Metric & $\beta$-VAE & FactorVAE & $\beta$-TCVAE & $\beta$-FlowVAE \\
\midrule
\multirow{5}{*}{dSprites}
 & DCI dis.         & $0.1879 \pm 0.0967$ & $0.2527 \pm 0.0617$ & $0.2963 \pm 0.0905$ & $0.0975 \pm 0.0307$ \\
 & MIG              & $0.1070 \pm 0.0706$ & $0.1643 \pm 0.0654$ & $0.1783 \pm 0.0689$ & $0.0443 \pm 0.0194$ \\
 & SAP              & $0.0465 \pm 0.0338$ & $0.0673 \pm 0.0119$ & $0.0663 \pm 0.0141$ & $0.0195 \pm 0.0062$ \\
 & $\beta$-VAE score     & $0.8238 \pm 0.0758$ & $0.8548 \pm 0.0225$ & $0.8628 \pm 0.0345$ & $0.7329 \pm 0.0665$ \\
 & FactorVAE score  & $0.6523 \pm 0.1018$ & $0.7354 \pm 0.0973$ & $0.7312 \pm 0.1204$ & $0.6020 \pm 0.0997$ \\
\midrule
\multirow{5}{*}{3DShapes}
 & DCI dis.         & $0.5349 \pm 0.3034$ & $0.6762 \pm 0.0847$ & $0.6539 \pm 0.3033$ & $0.1844 \pm 0.0386$ \\
 & MIG              & $0.2279 \pm 0.1791$ & $0.2977 \pm 0.1344$ & $0.2899 \pm 0.2412$ & $0.0617 \pm 0.0215$ \\
 & SAP              & $0.0662 \pm 0.0460$ & $0.0707 \pm 0.0320$ & $0.0811 \pm 0.0490$ & $0.0265 \pm 0.0112$ \\
 & $\beta$-VAE score     & $0.9675 \pm 0.0621$ & $0.9477 \pm 0.0741$ & $0.9756 \pm 0.0520$ & $0.8031 \pm 0.0596$ \\
 & FactorVAE score  & $0.8386 \pm 0.1014$ & $0.8234 \pm 0.0676$ & $0.8725 \pm 0.1403$ & $0.7142 \pm 0.0834$ \\
\bottomrule
\end{tabular}
\label{tab:disentanglement_comparison}
\end{table*}

To further validate this dual-mechanism explanation, we design additional experiments on standard image disentanglement benchmarks, namely dSprites \citep{dsprites17} and 3DShapes \citep{3dshapes18}. In Sec.\ref{sec:exp}, we considered models without orthogonality bias (NFs) and showed that adding this bias improves disentanglement. Here, we follow the complementary approach: we start from a model that includes both independence and orthogonality (a VAE), and we remove the orthogonality bias to evaluate its effect.

The mechanism by which VAEs implicitly enforce Jacobian orthogonality has been analyzed by \citet{reizinger2022embrace}. In simplified terms, the factorized Gaussian posterior enforces a row-orthogonal encoder Jacobian, and in the near-deterministic regime the decoder approximately inverts the encoder, resulting in a column-orthogonal decoder Jacobian. Based on this observation, we construct a variant of the $\beta$-VAE where we replace the diagonal Gaussian posterior with a NF posterior. This is a standard technique in the VAE literature to increase expressiveness and better match the prior, thereby improving density modeling \citep{kingma2016improved}. However, unlike the factorized Gaussian case, this more expressive posterior no longer enforces a diagonal covariance structure, and thus removes the implicit Jacobian orthogonality constraint. Importantly, this change leaves the overall objective unchanged, particularly the total-correlation/independence pressure imposed by the prior, thereby isolating the effect of removing the orthogonality bias while preserving the independence mechanism.

We implement this model following \citep{locatello2019challenging} and reproduce standard $\beta$-VAE results. We then train the exact same model, with identical architecture and training procedure, but using a flow-based posterior (denoted $\beta$-flowVAE). The results are reported in Tab.\ref{tab:disentanglement_comparison}. If orthogonality were not important, this more expressive model should yield improved disentanglement, since it better matches the prior hence improving independence. However, we observe a systematic substantial decrease in disentanglement performance. The key difference is precisely the removal of the orthogonality bias. This provides direct empirical evidence that independence alone is not sufficient, and that orthogonality plays a crucial role, further confirming our claims on the sources of disentanglement.

\textbf{Beyond VAE's} : Although VAEs remain the canonical model for disentanglement, they suffer from the well-documented \emph{information preference} or \emph{posterior collapse} phenomenon \citep{chen2016variational, zhao2017towards}. Intuitively, due to the explicit minimization of the mutual information $I(x;z)$ in
Eq.\ref{eq:VAE}, VAEs trade off expressive reconstruction against informative latent representations, often resulting in blurry reconstructions, weak generative fidelity, and latent variables that carry minimal semantic signal.

Our analysis shows that disentanglement relies on independence and orthogonality constraints, not on the VAE objective per se. This opens the door to transplanting these inductive biases into alternative generative models that avoid VAE-specific limitations. In Fig.\ref{fig:boxplots}, we demonstrated that NFs, which do not naturally produce disentangled representations, can be endowed with disentanglement capabilities simply by augmenting them with an orthogonal 
regularization. Unlike VAEs, flows optimize exact likelihoods and do not include a mutual information penalty in Eq.\ref{eq:NF}, thereby 
avoiding the same reconstruction--information trade-off.

This perspective reinforces the relevance of recent approaches such as orthogonality regularized GANs \citep{wei2021orthogonal} or diffusion models \citep{chen2024exploring, po2024orthogonal}. Our framework provides a theoretical justification for why these methods succeed. We argue that orthogonality should be viewed as a fundamental design principle for concept-structured and disentangled representations. By clarifying the mechanisms through which disentanglement arises, our framework illuminates how to systematically design new models that embody these principles. Overall, these insights indicate a promising direction for future research aimed at developing principled generative models with inherently disentangled representations. In particular, developing scalable methods to enforce or approximate Jacobian orthogonality efficiently in high-dimensional settings remains an important open problem, and a natural next step toward making the theoretical guarantees established in this work practically applicable at scale.

\section{Conclusion}\label{sec:ccl}

We showed that disentanglement can be achieved without statistical independence or causal supervision by defining concepts through orthogonal functional influence. Under this notion, nonlinear generative models with orthogonal Jacobians are identifiable up to standard ambiguities, even with dependent latent factors, extending prior identifiability results. This framework explains the empirical disentanglement observed in models such as VAEs, clarifies why likelihood-based models fail without geometric constraints, and establishes orthogonality as a fundamental principle for identifiable and meaningful representation learning.

\section*{Impact Statement}

This paper presents work whose goal is to advance the field of Machine Learning. There are many potential societal consequences of our work, none which we feel must be specifically highlighted here.

\section*{Acknowledgments}

Mathieu Cyrille Simon is a Research Fellow of the Fonds de la Recherche Scientifique - FNRS of Belgium. Computational resources have been provided by the supercomputing facilities of the Université catholique de Louvain (CISM/UCL) and the Consortium des Équipements de Calcul Intensif en Fédération Wallonie Bruxelles (CÉCI) funded by the Fonds de la Recherche Scientifique de Belgique (F.R.S.-FNRS) under convention 2.5020.11 and by the Walloon Region

\bibliography{example_paper}
\bibliographystyle{icml2026}

\newpage
\appendix
\onecolumn

\section*{Appendix}
\vspace{1em}

\begin{center}
\begin{tabular}{@{}p{0.85\linewidth} r@{}}
\textbf{A \quad Proofs and Detailed Derivations} & \pageref{app:A} \\
\quad A.1 Preliminary Results & \pageref{app:A1} \\
\quad A.2 Proof of Proposition \ref{prop:1} & \pageref{app:A2} \\
\quad A.3 Proof of Proposition \ref{prop:2} & \pageref{app:A3} \\
\quad A.4 Discussion on Assumption~\ref{ass:1} and Assumption~\ref{ass:2} & \pageref{app:dis_assumptions} \\[0.5em]

\textbf{B \quad Disentanglement Under General Latent Domains} & \pageref{app:B} \\
\quad B.1 Proof and Geometric Interpretation & \pageref{app:B1} \\
\quad B.2 The Two-Dimensional Case & \pageref{app:B2} \\[0.5em]

\textbf{C \quad Experimental Setup} & \pageref{app:C} \\
\quad C.1 Datasets and Generative Processes & \pageref{app:C1} \\
\quad C.2 Evaluation Metrics & \pageref{app:C2} \\
\quad C.3 Models, Architectures, and Training Objectives & \pageref{app:C3} \\
\quad C.4 Boundary Behavior and Optimization Issues & \pageref{app:C4} \\
\quad C.5  Experiments with Varying Latent Domains & \pageref{app:C5} \\[0.5em]

\textbf{D \quad Additional Results and Experiments} & \pageref{app:D} \\
\quad D.1 Quantitative Comparison & \pageref{app:D1} \\
\quad D.2 Qualitative Results & \pageref{app:D2} \\[0.5em]
\end{tabular}
\end{center}

\section{Proofs and Detailed Derivations} \label{app:A}

This appendix provides the formal proofs of the main theoretical results stated in Sec.\ref{sec:identifiability}. We begin by introducing auxiliary results that will be used throughout the proofs. We then establish identifiability under the independence assumption (Prop.\ref{prop:1}) and discuss its connection to classical Independent Component Analysis (ICA). Finally, we present the proof of identifiability under dependent latent factors (Prop.\ref{prop:2}) and highlight the key differences with respect to the independent case.

Throughout this appendix, as specified in Def.\ref{def:1}, all functions are assumed to be real-analytic, and all mappings are assumed to be invertible on their respective domains unless stated otherwise.

\subsection{Preliminary Results} \label{app:A1}

Both propositions rely on the same two auxiliary results, which we isolate here. Preliminary 1 describes the pointwise structure of the transition between two models. Preliminary 2 shows that, once this transition is expressed on the hypercube and behaves well up to its boundary, it must reduce to a permutation composed with a coordinate-wise reparameterization. The two propositions then differ only in how they establish the hypotheses of Preliminary 2.

Throughout, we consider two models $(\mathbf{f},p_\mathbf{z})$ and $(\mathbf{g},p_\mathbf{\hat{z}})$ satisfying Asm.\ref{ass:1} and inducing the same observed distribution,
\begin{equation}\label{eq:same_pushforward}
    \mathbf{f}_*p_\mathbf{z}=\mathbf{g}_*p_\mathbf{\hat{z}}.
\end{equation}
Because $\mathbf{f}$ and $\mathbf{g}$ are diffeomorphisms onto the same $d$-dimensional manifold, define
\begin{equation}\label{eq:transition}
    \mathbf{h}\coloneqq \mathbf{f}^{-1}\circ\mathbf{g}:\mathbb{R}^d\rightarrow\mathbb{R}^d,
\end{equation}
so that, by the pushforward identity,
\begin{equation}\label{eq:pushforward_h}
    \mathbf{h}_*p_\mathbf{\hat{z}}=p_\mathbf{z}.
\end{equation}
Thus $\mathbf{h}$ is a smooth bijection that transforms the density $p_{\hat{\mathbf{z}}}$ into the density $p_{\mathbf{z}}$. Establishing identifiability amounts to showing that $\mathbf{h}$ is a permutation of coordinates composed with an element-wise reparameterization, which are the ambiguities allowed in Def.\ref{def:1}.

\paragraph{Preliminary 1 (pointwise structure of the transition).} Fix $u\in\mathbb{R}^d$ and let $v=\mathbf{h}(u)$. By Asm.\ref{ass:1}, at the corresponding points the Jacobians of $\mathbf{f}$ and $\mathbf{g}$ admit the column-orthogonal factorization
\begin{equation}
    J_\mathbf{f}(v)=Q_\mathbf{f}(v) D_\mathbf{f}(v), \quad\quad J_\mathbf{g}(u)=Q_\mathbf{g}(u) D_\mathbf{g}(u),
\end{equation}
where $Q_{\mathbf{f}}(v),Q_{\mathbf{g}}(u)\in\mathbb{R}^{n\times d}$ have orthonormal columns ($Q^TQ=I_d$), and $D_{\mathbf{f}}(v),D_{\mathbf{g}}(u)\in\mathbb{R}^{d\times d}$ are diagonal positive matrices whose $i$-th entries are the influence strengths $\|\partial\mathbf{f}/\partial z_i\|$ and $\|\partial\mathbf{g}/\partial z_i\|$. Both factors are uniquely determined by the mapping, and the diagonal factors are positive because the Jacobians have full column rank. Because $\mathbf{f}$ is a local diffeomorphism, the derivative of $\mathbf{f}^{-1}$ at $x=\mathbf{f}(v)$ equals the left inverse of $J_{\mathbf{f}}(v)$:
\begin{equation}
    D(\mathbf{f}^{-1})(x) = (J_\mathbf{f}(v))^+ = (J_\mathbf{f}(v)^T J_\mathbf{f}(v))^{-1}J_\mathbf{f}(v)^T=D_\mathbf{f}(v)^{-1}Q_\mathbf{f}(v)^T,
\end{equation}
where we used $J_{\mathbf{f}} = Q_{\mathbf{f}}D_{\mathbf{f}}$ and $Q_{\mathbf{f}}^T Q_{\mathbf{f}}=I_d$. Consequently, the Jacobian of $\mathbf{h}$ at $u$ is
\begin{align}
J_{\mathbf{h}}(u)= D(\mathbf{f}^{-1})(\mathbf{g}(u)) J_{\mathbf{g}}(u)
= D_{\mathbf{f}}(v)^{-1} Q_{\mathbf{f}}(v)^T Q_{\mathbf{g}}(u)\, D_{\mathbf{g}}(u).
\end{align}

Since $\mathbf{f}(v)=\mathbf{g}(u)$ is the same point $x$ on the embedded manifold, the column spaces of $Q_{\mathbf{f}}(v)$ and $Q_{\mathbf{g}}(u)$ form both an orthogonal basis for the tangent space $T_x$ of the embedded manifold at $x$. In this tangent space they have orthonormal columns (rows) and thus also orthonormal rows (columns). Hence the relative frame of Asm.\ref{ass:1},
\begin{equation}\label{eq:R_def}
    R(u) \coloneqq Q_\mathbf{f}(v)^T Q_\mathbf{g}(u),
\end{equation}
satisfies $R(u)^T R(u)=R(u)R(u)^T=I_d$, and the Jacobian of $\mathbf{h}$ admits the pointwise decomposition
\begin{equation}\label{eq:J_h_decomp}
J_{\mathbf{h}}(u) = D_{\mathbf{f}}(v)^{-1}R(u)D_{\mathbf{g}}(u),
\end{equation}
i.e. a diagonal--orthogonal--diagonal factorization valid for every $u$. \hfill $\square$

\paragraph{Preliminary 2 (rigidity on the hypercube).} Let $\mathbf{k}$ be a diffeomorphism
between two latent domains whose Jacobian admits, at every point, a decomposition
\begin{equation}\label{eq:prelim2_hyp}
    J_\mathbf{k}(u)=\Delta_1(\mathbf{k}(u))^{-1}\,R(u)\,\Delta_2(u),
    \qquad \Delta_1,\Delta_2\ \text{diagonal positive},\ R(u)\in O(d),
\end{equation}
let $\tau$ and $\hat\tau$ be coordinate-wise diffeomorphisms carrying these domains onto the
open hypercube $(0,1)^d$, and define
\begin{equation}\label{eq:psi_def}
   \psi \coloneqq \tau\circ\mathbf{k}\circ\hat{\tau}^{-1}:(0,1)^d\rightarrow(0,1)^d.
\end{equation}
Assume that $\psi$ and $R$ extend real-analytically to a connected open neighborhood $W\supset[0,1]^d$, that $J_\psi$ has full rank at every point of $[0,1]^d$, and that $\psi$ is injective on $[0,1]^d$. Then $R$ is constant, equal to a signed permutation matrix $\mathbf{P}$, and $\mathbf{k}=\mathbf{P}\circ\mathbf{t}$ with $\mathbf{t}$ coordinate-wise.

\emph{Proof.} Since $\tau$ and $\hat\tau$ are coordinate-wise, their Jacobians are diagonal,
so conjugation preserves the structure \eqref{eq:prelim2_hyp}: the Jacobian $J_\psi$ has the
same form, with the same orthogonal factor $R$ and possibly different diagonal factors,
\begin{equation}\label{eq:J_psi}
    J_\psi(u) = \widetilde{\Delta}_1(\psi(u))^{-1}\, R(u)\, \widetilde{\Delta}_2(u),
    \qquad \widetilde{\Delta}_1,\widetilde{\Delta}_2\ \text{diagonal positive}.
\end{equation}
For the same reason $\partial/\partial u_i$ is a nonzero multiple of the corresponding
derivative in the original coordinates, so pure derivatives correspond to pure derivatives
and mixed multi-indices retain their support: Eq.\eqref{eq:purejet} may equivalently be read
for $R$ as a function on $[0,1]^d$.

\emph{(i) Boundary strata are preserved.} By hypothesis $\psi$ extends to a continuous bijection with continuous inverse on $[0,1]^d$, hence a homeomorphism. By classical topological results (\citet{brouwer1911beweis} Domain Invariance Theorem), any homeomorphism between subsets of $\mathbb{R}^d$ maps interiors onto interiors and boundaries onto boundaries. Since in addition $J_\psi$ has full rank on $[0,1]^d$, $\psi$ is a local diffeomorphism up to the boundary. A point of $[0,1]^d$ with exactly $k$ extremal coordinates has a neighborhood in $[0,1]^d$ homeomorphic to a $k$-fold corner, and a local diffeomorphism preserves this local model, hence the number $k$; consequently $\psi$ preserves the stratification of $\partial[0,1]^d$ by the number of extremal coordinates: faces
\begin{equation}
    F_i^- = \{ u \in [0,1]^d \mid u_i = 0 \}, \qquad
    F_i^+ = \{ u \in [0,1]^d \mid u_i = 1 \}
\end{equation}
are mapped onto faces, intersections of $k$ faces onto intersections of $k$ faces, and in particular edges onto edges and corners onto corners.

\emph{(ii) One column of $R$ is pinned on each face.} Let $p\in F_i^\pm$ and suppose $\psi$
maps $F_i^\pm$ onto $F_k^\pm$. Restricted to these faces, $\psi$ is a diffeomorphism, so its
differential at $p$ maps the tangent space $T_pF_i^\pm=\mathrm{span}\{e_j\mid j\neq i\}$ onto
$T_{\psi(p)}F_k^\pm=\mathrm{span}\{e_l\mid l\neq k\}$. Equivalently, the $k$-th row of
$J_\psi(p)$ annihilates every tangent direction of $F_i^\pm$:
\begin{equation} \label{eq:J_face2face}
    \big(J_\psi(p)\big)_{kj}=0 \qquad \forall j\neq i .
\end{equation}
Writing $\tilde\delta_{1,k}$ and $\tilde\delta_{2,j}$ for the diagonal entries of
$\widetilde{\Delta}_1$ and $\widetilde{\Delta}_2$, the factorization \eqref{eq:J_psi} gives
$\big(J_\psi\big)_{kj}=\tilde\delta_{1,k}(\psi(p))^{-1}\,R_{kj}(p)\,\tilde\delta_{2,j}(p)$,
and these diagonal entries are nonzero; hence $R_{kj}(p)=0$ for all $j\neq i$. The $k$-th row
of $R(p)$ is therefore $\pm e_i^T$, so $R_{ki}(p)=\pm1$; since the $i$-th column of $R(p)$ is
a unit vector, that single entry already exhausts its norm, so all its other entries vanish
and the $i$-th column equals $\pm e_k$. Note that only the row and column associated with the
\emph{normal} direction of the face containing $p$ are constrained in this way: a single face
pins one column of $R$, not the whole matrix.

\emph{(iii) $R$ is a constant signed permutation along each edge.} Let $E$ be an edge of the
hypercube, that is, the intersection of $d-1$ faces, parametrized by a single free coordinate
$u_i$. Every point $p\in E$ belongs to each of these $d-1$ faces, so by (ii) the $d-1$ columns
of $R(p)$ associated with their normal directions are each equal to some $\pm e_k$, with
distinct indices $k$, since two columns of an orthogonal matrix are mutually orthogonal and
could not both be equal to $\pm e_k$ for the same $k$. The remaining column, being a unit
vector orthogonal to the other $d-1$, is then also $\pm e_k$ for the remaining index. Hence
$R(p)$ is a signed permutation matrix at every point of $E$. Signed permutation matrices form
a discrete set and $R$ is continuous on $[0,1]^d$, so $R$ is constant along the connected edge
$E$.

\emph{(iv) Pure jets vanish at a corner.} Let $u_0$ be a corner of $[0,1]^d$, that is, the
intersection of $d$ faces. Exactly $d$ edges meet at $u_0$, one along each coordinate
direction, and they all contain $u_0$, so by (iii) the constant value of $R$ along each of
them is $R(u_0)\eqqcolon\mathbf{P}$, a signed permutation matrix. 
Since the edge in direction $i$ is the coordinate segment through $u_0$ in that direction, differentiating $R$ along it gives vanishing one-sided derivatives of every order at $u_0$; as $R$ is real-analytic on the open set $W\supset[0,1]^d$, these coincide with the two-sided ones, so
\begin{equation}
    \partial_i^{\,k}R(u_0)=0 \qquad \forall i,\ \forall k\ge 1 .
\end{equation}

\emph{(v) Rigidity: from one corner to the whole domain.} Eq.\eqref{eq:purejet}, which by
Asm.\ref{ass:1} holds for the relative frame, converts the pure jets of (iv) into full jets:
\begin{equation}\label{eq:all_jets_corner}
    \partial^{\alpha}R(u_0)=0 \qquad \forall\,|\alpha|\ge 1 .
\end{equation}
The conclusion then propagates in two stages, the first local and the second global.

\emph{Locally.} The point $u_0$ is a corner of the cube, but it is an \emph{interior} point of
the open neighborhood $W$ to which $R$ extends real-analytically. Its Taylor series at $u_0$
therefore converges to $R$ on some ball $B(u_0,\rho)\subset W$, and by
\eqref{eq:all_jets_corner} that series reduces to its constant term:
\begin{equation}\label{eq:R_local}
    R(u)\equiv R(u_0)=\mathbf{P}\qquad\text{for all }u\in B(u_0,\rho).
\end{equation}
This is the step that lets a condition imposed at a corner act on the interior of the cube.
The information is not confined to the boundary, because the boundary is not a boundary for
the extended objects: $R$ is already defined and analytic on both sides of it, so pinning all
its derivatives at $u_0$ fixes it on a full neighborhood, which meets the interior of
$[0,1]^d$ in an open set.

\emph{Globally.} Let
\begin{equation}
    S \coloneqq \big\{\,u\in W \;:\; \partial^{\alpha}R(u)=0 \ \ \forall|\alpha|\ge1 \,\big\}.
\end{equation}
The set $S$ is closed in $W$, being an intersection of zero sets of continuous functions, and
nonempty by \eqref{eq:all_jets_corner}. It is also open: if $u\in S$, the argument leading to
\eqref{eq:R_local} makes $R$ constant on a ball around $u$, and every point of that ball then
lies in $S$ as well. Since $W$ is connected, $S=W$, so
\begin{equation}\label{eq:R_constant}
    R(u)\equiv\mathbf{P}\qquad\text{on }W\supset[0,1]^d,
\end{equation}
and, pulling back through the diffeomorphisms $\tau,\hat\tau$, on the whole latent domain,
which is connected by Def.\ref{def:1}. Note that \eqref{eq:R_constant} is exactly the
classical identity theorem for real-analytic functions, applied not to a single vanishing
point but to the closed condition of having all derivatives zero; connectedness is what turns
the local statement into a global one, and it is the only global topological input the
argument uses.

\emph{(vi) Conclusion.} Substituting \eqref{eq:R_constant} into \eqref{eq:prelim2_hyp},
\begin{equation}\label{eq:J_h_final}
    J_{\mathbf{k}}(u) = \Delta_1(\mathbf{k}(u))^{-1}\, \mathbf{P}\, \Delta_2(u),
\end{equation}
which is the signed permutation $\mathbf{P}$ times a diagonal matrix. Hence
$\partial \mathbf{k}_i/\partial u_j=0$ whenever $j\neq\sigma(i)$, where $\sigma$ is the
permutation carried by $\mathbf{P}$: each coordinate of $\mathbf{k}$ depends only on the
corresponding coordinate of $u$, and is strictly monotone by invertibility. Explicitly,
setting $\mathbf{t}_i(u_i)\coloneqq\mathbf{k}_{\sigma^{-1}(i)}(u_i)$ gives
\begin{equation}\label{eq:h_final}
    \mathbf{k} = \mathbf{P}\circ\mathbf{t},
\end{equation}
with $\mathbf{t}$ a coordinate-wise invertible reparameterization. \hfill $\square$

\paragraph{Remark.} The two hypotheses of Preliminary 2, namely that $\psi$ extends analytically to $[0,1]^d$ and that its Jacobian has full rank there, are what the two propositions establish by different means: through the regularity condition of Sec.\ref{sec:problem} in Prop.\ref{prop:1}, where the latent domain may be unbounded, and directly from Asm.\ref{ass:2} in Prop.\ref{prop:2}, where it is bounded.

\subsection{Proof of Proposition \ref{prop:1}}\label{app:A2}

\paragraph{Proposition \ref{prop:1}.} Let the observed data $\mathbf{x} \in \mathbb{R}^n$ lie on a $d$-dimensional manifold generated by latent variables $\mathbf{z} \in \mathbb{R}^d$, with $d \ll n$, through a mapping $\mathbf{f}\in\mathcal{F}$ satisfying Asm.\ref{ass:1}. Each latent coordinate $z_i$ corresponds to a distinct underlying concept. Suppose further that the latent variables are statistically independent, $p(\mathbf{z})=\prod_i p(\mathbf{z}_i)$, and that at most one component $z_i$ follows a Gaussian distribution. Then, the generative process $(\mathbf{f}, p_{\mathbf{z}})$ is identifiable in the sense of Def.\ref{def:1}.

\paragraph{On the regularity condition and its residual rotation.} Before turning to the proof, we make explicit the content of the regularity condition of Sec.\ref{sec:problem} and the form of the freedom it leaves, since it is what the argument below has to eliminate.

The regularity condition states that the extremes of a factor remain genuine configurations of the remaining ones: pushing one concept to the end of its range neither annihilates the others nor sends the observation off to infinity, so that the $d$ modes of variation remain present and independent in the limit. Read through the lens of ICM, it is the statement that each concept's mode of action has a well-defined limiting form, rather than one that keeps being reorganized by the remaining factors arbitrarily far out. It is automatic whenever the latent support is bounded, since $\mathbf{f}$ is then real-analytic and invertible on a neighbourhood of its closure.

The latent variables may range over an unbounded set. The condition does not forbid this: it requires only that the \emph{composite} $\mathbf{f}\circ\tau^{-1}$ be regular, where $\tau$ denotes the coordinate-wise quantile map carrying the latent domain onto the open hypercube $(0,1)^d$. Whatever $\tau$ stretches, $\mathbf{f}$ must contract at the same rate, so that the two degeneracies cancel and the composite extends real-analytically and non-degenerately to the closed hypercube $[0,1]^d$. What is required to close up is the observation manifold, not the latent domain.

The quantile map is not canonical, however. It is built from the marginals of the latent variables, hence determined only once one has fixed the coordinates in which those variables are expressed, and Asm.\ref{ass:1} does not fix them. Indeed, for any coordinate-wise diffeomorphism $\varphi$ the pair $(\mathbf{f}\circ\varphi^{-1},\varphi(\mathbf{z}))$ describes the same generative process, and
\begin{equation}\label{eq:reparam_QD}
    J_{\mathbf{f}\circ\varphi^{-1}} = Q_\mathbf{f}(\varphi^{-1})\,\big(D_\mathbf{f}(\varphi^{-1})J_{\varphi^{-1}}\big)
\end{equation}
is again of the form $QD$, since $J_{\varphi^{-1}}$ is diagonal: Asm.\ref{ass:1} is blind to coordinate-wise reparameterizations, and passing to quantile coordinates is a legitimate normalization of this freedom.

This is not, however, the only freedom, and the factorization $J_\mathbf{f}=Q_\mathbf{f}D_\mathbf{f}$ itself says why. The two factors carry two different things: $D_\mathbf{f}$ carries the influence strengths, which are exactly what a coordinate-wise reparameterization rescales, and $Q_\mathbf{f}$ carries the frame in which the factors act. Eq.\eqref{eq:reparam_QD} is the statement that the reparameterization freedom lives entirely in the diagonal factor, and normalizing it, by passing to quantile coordinates, pins that factor down. It says nothing whatsoever about the frame. A constant right-rotation of $Q_\mathbf{f}$ leaves the class $QD$ untouched, changes no influence strength, and is invisible to Eq.\eqref{eq:purejet}, which constrains only \emph{derivatives} of the relative frame. The residual freedom is therefore not an artifact of the quantile construction: it is the frame half of $QD$, left over once the diagonal half has been normalized away.

Concretely, writing $\mathbf{f}_A\coloneqq\mathbf{f}\circ A^{\top}$ for $A\in O(d)$, one has $J_{\mathbf{f}_A}=Q_\mathbf{f}D_\mathbf{f}A^{\top}$, which is again of the form $QD$ if and only if $D_\mathbf{f}A^{\top}=A^{\top}D_\mathbf{f}$, in which case
\begin{equation}\label{eq:rotated_QD}
    J_{\mathbf{f}_A}=\big(Q_\mathbf{f}A^{\top}\big)D_\mathbf{f},
    \qquad\text{that is}\qquad
    Q_{\mathbf{f}_A}=Q_\mathbf{f}A^{\top},
    \qquad D_{\mathbf{f}_A}=D_\mathbf{f} .
\end{equation}
The commutation requirement is thus the precise condition for $A$ to act on the frame \emph{alone}, without leaking into the diagonal factor and disturbing the reparameterization that has just been normalized; it is the rotational counterpart of the diagonality of $J_{\varphi^{-1}}$ in \eqref{eq:reparam_QD}. Signed permutations always commute with $D_\mathbf{f}$; for arbitrary $A$ this holds in the conformal case $D_\mathbf{f}=\lambda I$, and more generally whenever $A$ acts within the eigenspaces of $D_\mathbf{f}$. When it holds, the rotation is absorbed entirely into the frame, so the relative frame of the rotated pair differs from the original one by constant factors on either side and the argument of Preliminary 2 applies unchanged.

For the rotated description to be an equally valid account of the \emph{same} generative process, one more thing is needed: the latent law must not change. We therefore call $A\in O(d)$ \emph{admissible} for $(\mathbf{f},p_\mathbf{z})$ when it commutes with $D_\mathbf{f}$ at every point and
\begin{equation}\label{eq:rot_invariance}
    A_*p_\mathbf{z}=p_\mathbf{z}.
\end{equation}
Under \eqref{eq:rot_invariance} the pair $(\mathbf{f}\circ A^{\top},A\mathbf{z})$ induces the same observations, has the same latent law, hence the same marginals and the same quantile map, and is regular in those coordinates exactly when $(\mathbf{f},\mathbf{z})$ is. The two descriptions are indistinguishable in every respect the model records: same observations, same normalized model, different latent axes.

The residual rotation is therefore not a tolerance introduced for convenience, but the exact freedom left after the coordinate-wise one has been normalized away. Which rotations are available is a property of the latent law alone: they form the group
\begin{equation}\label{eq:symmetry_group}
    \mathcal{G}(p_\mathbf{z})\coloneqq\{\,Q\in O(d)\;:\;Q_*p_\mathbf{z}=p_\mathbf{z}\,\}
\end{equation}
of its orthogonal symmetries. This group is what the additional assumptions of Prop.\ref{prop:1} and Prop.\ref{prop:2} collapse, by statistical and geometric means respectively.

\paragraph{Proof.}
Assume $\mathbf{g}:\mathbb{R}^d\to\mathbb{R}^n$ satisfies Asm.\ref{ass:1} and that
\begin{equation}
    \mathbf{f}_*p_\mathbf{z}=\mathbf{g}_*p_\mathbf{\hat{z}},
\end{equation}
where $p_{\hat{\mathbf{z}}}=\prod_{i=1}^d p_{\hat z_i}$, i.e. the latent coordinates $\hat{\mathbf{z}}\in\mathbb{R}^d$ are independent. Define $\mathbf{h}=\mathbf{f}^{-1}\circ\mathbf{g}$ as in \eqref{eq:transition}; by Preliminary 1 its Jacobian admits the diagonal--orthogonal--diagonal decomposition \eqref{eq:J_h_decomp} at every point.

\emph{Step 1: alignment of the two normalizations.} Because $p_\mathbf{z}$ and $p_{\hat{\mathbf{z}}}$ have independent components, the coordinatewise CDF maps
\begin{equation}
     \tau\coloneqq(F_{z_1},\dots,F_{z_d}):\mathbb{R}^d\rightarrow(0,1)^d, \qquad
     \hat{\tau}\coloneqq(F_{\hat z_1},\dots,F_{\hat z_d}):\mathbb{R}^d\rightarrow(0,1)^d,
\end{equation}
are smooth, strictly monotone, and push each latent domain onto the whole open hypercube $(0,1)^d$ with the uniform law. 
By the regularity condition of Sec.\ref{sec:problem}, there are admissible rotations $A$ for $(\mathbf{f},p_\mathbf{z})$ and $B$ for $(\mathbf{g},p_{\hat{\mathbf{z}}})$ such that $\mathbf{f}_A\coloneqq\mathbf{f}\circ A^{\top}$ and $\mathbf{g}_B\coloneqq\mathbf{g}\circ B^{\top}$ are regular in the quantile coordinates $\tau,\hat\tau$ of the rotated laws $A_*p_\mathbf{z}$ and $B_*p_{\hat{\mathbf{z}}}$, which are again product laws by admissibility. The two rotations need not coincide. Consider accordingly the aligned transition
\begin{equation}\label{eq:h_R0}
    \mathbf{h}_{R_0}\coloneqq \mathbf{f}_A^{-1}\circ\mathbf{g}_B
    \;=\; A\circ\mathbf{h}\circ B^{\top},
    \qquad
    \psi \coloneqq \tau\circ\mathbf{h}_{R_0}\circ\hat{\tau}^{-1}:(0,1)^d\rightarrow(0,1)^d .
\end{equation}
By \eqref{eq:rotated_QD}, $\mathbf{f}_A$ and $\mathbf{g}_B$ still satisfy Asm.\ref{ass:1}, so Preliminary 1 applies to the aligned pair and gives $J_{\mathbf{h}_{R_0}}=D_\mathbf{f}^{-1}R_\star D_\mathbf{g}$ with relative frame $R_\star(u)=A\,R(u)\,B^{\top}$.

\emph{Step 2: $\psi$ satisfies the hypotheses of Preliminary 2.} Writing
\begin{equation}\label{eq:psi_composite}
    \psi=\big(\mathbf{f}_A\circ\tau^{-1}\big)^{-1}\circ\big(\mathbf{g}_B\circ\hat\tau^{-1}\big),
\end{equation}
both factors extend real-analytically and non-degenerately to $[0,1]^d$ by the regularity condition; nothing is rotated in observation space, so $\mathbf{f}_A$ and $\mathbf{g}_B$ have the same image and the composition is well defined on the closed cube.
Hence $\psi$ extends real-analytically to the closed hypercube with a Jacobian of full rank there. Note that no boundedness of the latent domain has been used: the degeneracies of the quantile maps are cancelled by those of $\mathbf{f}$ and $\mathbf{g}$ in \eqref{eq:psi_composite}, which is precisely the content of the regularity condition.

\emph{Step 3: the aligned transition is a permutation.} By Preliminary 2 applied to $\mathbf{h}_{R_0}$, the relative frame of the aligned pair is constant, equal to a signed permutation matrix $\mathbf{P}$, and
\begin{equation}
    \psi=\mathbf{P}\circ\varphi,
    \qquad \varphi=(\varphi_1,\dots,\varphi_d) \text{ coordinate-wise}.
\end{equation}
By construction $\tau$ and $\hat\tau$ push the latent laws to the uniform law on the hypercube, and $\mathbf{h}_{R_0}$ pushes $p_{\hat{\mathbf{z}}}$ to $p_\mathbf{z}$ by \eqref{eq:pushforward_h}; hence $\psi$ pushes the uniform law on $(0,1)^d$ to itself. Each $\varphi_i$ therefore transforms a $\mathrm{Unif}(0,1)$ variable into a $\mathrm{Unif}(0,1)$ variable, and the only smooth, strictly monotone maps of $(0,1)$ with this property are the identity and the reflection $t\mapsto1-t$,
\begin{equation}\label{eq:varphi_identity}
    \varphi_i(t)\in\{\,t,\,1-t\,\},
\end{equation}
both of which are absorbed into the signs of $\mathbf{P}$. Consequently
\begin{equation}\label{eq:h_R0_final}
    \psi=\mathbf{P},
    \qquad\text{that is}\qquad
    \mathbf{h}_{R_0}=\tau^{-1}\circ\mathbf{P}\circ\hat{\tau}.
\end{equation}

\emph{Step 4: the original transition is a reparameterization, a fixed rotation, and a reparameterization.} Differentiating \eqref{eq:h_R0_final} and using $\mathbf{h}_{R_0}=\mathbf{f}_A^{-1}\circ\mathbf{g}_B$ together with Preliminary 1,
\begin{equation}\label{eq:J_two_ways}
    J_{\tau^{-1}}\big(\mathbf{P}\hat\tau\big)\,\mathbf{P}\,J_{\hat\tau}
    \;=\;
    J_{\mathbf{h}_{R_0}}
    \;=\;
    D_\mathbf{f}(v)^{-1}\,\big(A\,R(u)\,B^{\top}\big)\,D_\mathbf{g}(u),
\end{equation}
where $v=\mathbf{h}(u)$ and we used $D_{\mathbf{f}_A}=D_\mathbf{f}$, $D_{\mathbf{g}_B}=D_\mathbf{g}$ from \eqref{eq:rotated_QD}. The left-hand side is a signed permutation flanked by two diagonal matrices, and the outer factors on the right are diagonal; by associativity the orthogonal factor in the middle is therefore itself a signed permutation, $A R(u) B^{\top}\equiv\mathbf{P}$, and the relative frame of the \emph{original} pair satisfies
\begin{equation}\label{eq:R_Q0}
    R(u)=Q_\mathbf{f}(v)^{T}Q_\mathbf{g}(u)\equiv Q\coloneqq A^{\top}\mathbf{P}B\in O(d)
    \qquad\text{for every }u .
\end{equation}
Substituting into \eqref{eq:J_h_decomp},
\begin{equation}\label{eq:J_h_Q}
    J_\mathbf{h}(u)=D_\mathbf{f}(\mathbf{h}(u))^{-1}\,Q\,D_\mathbf{g}(u):
\end{equation}
the transition acts as a coordinate-wise reparameterization, a fixed rotation, and a coordinate-wise reparameterization.

\emph{Step 5: the fixed rotation is an ICA indeterminacy.} The two reparameterizations in \eqref{eq:J_h_Q} are not arbitrary: by \eqref{eq:h_R0_final} they are the quantile transforms themselves, the right-hand one carrying $\hat{\mathbf{z}}$ to uniform coordinates and the left-hand one being the inverse of the corresponding transform for $\mathbf{z}$. Writing $\mathbf{t}$ for the coordinate-wise map they compose to, \eqref{eq:pushforward_h} and \eqref{eq:J_h_Q} give
\begin{equation}\label{eq:pushforward_final}
    p_\mathbf{z}=\mathbf{h}_*p_{\hat{\mathbf{z}}}=\big(Q\circ\mathbf{t}\big)_*p_{\hat{\mathbf{z}}} .
\end{equation}
Since $\mathbf{t}$ is coordinate-wise it preserves independence, so $\mathbf{t}_*p_{\hat{\mathbf{z}}}$ is again a product law. Equation \eqref{eq:pushforward_final} therefore states that the fixed orthogonal transformation $Q$ carries one distribution with independent components onto another. Equivalently, $Q$ belongs to the symmetry group \eqref{eq:symmetry_group} of the latent law, up to the coordinate-wise transform $\mathbf{t}$. This ambiguity corresponds precisely to the well-known indeterminacy in Independent Component Analysis (ICA), where the independent components are recoverable only up to scaling, permutation, and orthogonal rotation when multiple Gaussian sources are present.

\emph{Step 6: conclusion.} However, under the additional assumption that at most one latent variable follows a Gaussian distribution, the ICA indeterminacy reduces to permutations and elementwise reparameterizations only. In this case the orthogonal transformation $Q$ must necessarily be a permutation matrix $\mathbf{P}'$, since any non-trivial rotation would induce statistical dependence among the coordinates, contradicting the independence of $p_\mathbf{z}$. Substituting $Q=\mathbf{P}'$ into \eqref{eq:J_h_Q}, the Jacobian of $\mathbf{h}$ is a signed permutation flanked by two diagonal matrices, so $\partial\mathbf{h}_i/\partial u_j=0$ whenever $j\neq\sigma(i)$, with $\sigma$ the permutation carried by $\mathbf{P}'$: each coordinate of $\mathbf{h}$ depends only on the corresponding coordinate of $u$, and is strictly monotone by invertibility. Consequently
\begin{equation}
    \mathbf{z}=\mathbf{P}'\circ\mathbf{t}(\hat{\mathbf{z}}),
\end{equation}
which is precisely the form of identifiability stated in Def.\ref{def:1}. Therefore the generative process $(\mathbf{f},p_\mathbf{z})$ is identifiable up to permutation and coordinate-wise invertible reparameterizations of the latent variables, completing the proof. \hfill $\square$

\paragraph{Discussion.}
This proof makes explicit that Prop.\ref{prop:1} can be viewed as a nonlinear generalization of classical \textit{Independent Component Analysis} (ICA), where identifiability emerges from the interplay between a geometric constraint on the mixing function and a statistical constraint on the latent distribution. In linear ICA, identifiability follows from the fact that a linear mixing matrix with orthogonal columns preserves angles, while statistical independence restricts the admissible transformations to permutations and scalings, except in the presence of multiple Gaussian sources. Our analysis shows that the same logic extends to the nonlinear setting when the Jacobian of the generative mapping satisfies a pointwise orthogonality constraint.

From a geometric perspective, Asm.\ref{ass:1} enforces a strong form of functional independence: at every point in latent space, each latent coordinate influences the observations along an orthogonal direction in the tangent space of the data manifold. This local orthogonality plays the role of a nonlinear analogue of linear orthogonal mixing. The key technical step of the proof shows that any alternative generative explanation inducing the same observed distribution must differ from the true one by a diffeomorphism whose Jacobian admits a diagonal--orthogonal--diagonal factorization everywhere, and whose relative frame is pinned at the boundary of the normalized latent domain. Such mappings form a highly restricted class, and when combined with global statistical independence, they collapse to coordinate-wise transformations up to permutation.

The use of marginal CDF transforms highlights this collapse particularly clearly. By reducing the problem to diffeomorphisms of the unit hypercube that preserve independence and admit the same Jacobian structure, the identifiability question becomes largely geometric and topological. What the geometry alone cannot determine is a single constant orthogonal factor, which is exactly the object that classical ICA resolves through non-Gaussianity; the assumption that at most one latent variable is Gaussian then removes this residual ambiguity, exactly as in linear ICA.

Importantly, this argument clarifies why nonlinear ICA is generally non-identifiable in the absence of further constraints, as shown in prior impossibility results, and why Asm.\ref{ass:1} fundamentally changes the situation. Rather than attempting to recover independent components from arbitrary nonlinear mixtures, the orthogonal Jacobian assumption restricts the functional class of admissible generative models to those that behave locally like orthogonal linear maps. In this sense, Asm.\ref{ass:1} can be interpreted as a structural prior that collapses the nonlinear ICA problem back into an identifiable regime.

Conceptually, Prop.\ref{prop:1} demonstrates that identifiability does not rely on linearity per se, but on the preservation of orthogonal functional influence combined with a global statistical constraint. This perspective unifies linear ICA and the more general orthogonal-Jacobian setting studied here under a single principle: disentanglement becomes identifiable whenever local geometric independence and global statistical independence act in concert. This insight also explains why empirical methods that enforce orthogonality or decorrelation in the Jacobian often succeed in practice, even in nonlinear models, when the underlying generative process approximately satisfies these assumptions.

\subsection{Proof of Proposition \ref{prop:2}}\label{app:A3}

\paragraph{Proposition \ref{prop:2}.} Let the observed data $\mathbf{x} \in \mathbb{R}^n$ lie on a $d$-dimensional manifold generated by latent variables $\mathbf{z} \in \mathbb{R}^d$, with $d \ll n$, through a mapping $\mathbf{f}\in\mathcal{F}$ satisfying Asm.\ref{ass:1}. Suppose further that the latent distribution $p_{\mathbf{z}}$ satisfies Asm.\ref{ass:2}. Then, the generative process $(\mathbf{f}, p_{\mathbf{z}})$ is identifiable in the sense of Def.\ref{def:1}.

\paragraph{Proof.} Assume $\mathbf{g}:\mathbb{R}^d\to\mathbb{R}^n$ satisfies Asm.\ref{ass:1} and that
\begin{equation}
    \mathbf{f}_*p_\mathbf{z}=\mathbf{g}_*p_\mathbf{\hat{z}},
\end{equation}
where $p_{\hat{\mathbf{z}}}$ satisfies Asm.\ref{ass:2}. Define $\mathbf{h}=\mathbf{f}^{-1}\circ\mathbf{g}$ as in \eqref{eq:transition}; by Preliminary 1 its Jacobian admits the decomposition \eqref{eq:J_h_decomp} at every point. As before, we verify the two hypotheses of Preliminary 2.

By Asm.\ref{ass:2}, there exist smooth and invertible, strictly monotone coordinate-wise reparameterizations
\begin{equation}
    \tau \coloneqq (\tau_1,\dots,\tau_d):\mathbb{R}^d \to (0,1)^d,
    \qquad
    \hat{\tau} \coloneqq (\hat{\tau}_1,\dots,\hat{\tau}_d):\mathbb{R}^d \to (0,1)^d,
\end{equation}
carrying the supports of $p_{\mathbf{z}}$ and $p_{\hat{\mathbf{z}}}$ onto the open hypercube $(0,1)^d$. These maps are not assumed to be unique, nor to push $p_{\mathbf{z}}$ and $p_{\hat{\mathbf{z}}}$ to the same distribution on $(0,1)^d$; they merely provide a common normalized range, on which the copulas of both laws have full support by Asm.\ref{ass:2}. Define $\psi=\tau\circ\mathbf{h}\circ\hat\tau^{-1}$ as in \eqref{eq:psi_def}; by construction it is a diffeomorphism of $(0,1)^d$ onto itself.

The two hypotheses of Preliminary 2 follow here directly from the boundedness of the latent domain, without any further assumption. Indeed, by Asm.\ref{ass:2} the supports of $p_\mathbf{z}$ and $p_{\hat{\mathbf{z}}}$ are bounded and, up to the coordinate-wise reparameterizations $\tau$ and $\hat\tau$, are hypercubes; and the mappings $\mathbf{f}$ and $\mathbf{g}$ are real-analytic and invertible on an open neighborhood of the closure of these supports. The full-support copula condition of Asm.\ref{ass:2} moreover keeps the reparameterizations $\tau$ and $\hat\tau$ non-degenerate up to the boundary. Consequently $\mathbf{f}\circ\tau^{-1}$ and $\mathbf{g}\circ\hat\tau^{-1}$ are real-analytic with Jacobians of full column rank on the closed hypercube $[0,1]^d$, and therefore so is $\psi=\tau\circ\mathbf{h}\circ\hat\tau^{-1}$: the boundary of the latent domain consists here of ordinary points at which all the objects involved are already defined, rather than of limits.

The hypotheses of Preliminary 2 being satisfied, \eqref{eq:h_final} gives a signed permutation matrix $\mathbf{P}$ and a coordinate-wise invertible reparameterization $\mathbf{t}$ with $\mathbf{h}=\mathbf{P}\circ\mathbf{t}$, that is,
\begin{equation}
    \mathbf{z}=\mathbf{P}\circ\mathbf{t}(\hat{\mathbf{z}}),
    \qquad\text{and}\qquad
    p_{\mathbf{z}}
    =
    \mathbf{h}_* p_{\hat{\mathbf{z}}}
    =
    (\mathbf{P}\circ\mathbf{t})_* p_{\hat{\mathbf{z}}} .
\end{equation}
Consequently, the generative process $(\mathbf{f},p_{\mathbf{z}})$ is identifiable in the sense of Def.\ref{def:1}, completing the proof.
\hfill $\square$.

\paragraph{Discussion.}
The proof of Prop.\ref{prop:2} makes explicit the mechanisms underlying identifiability in the dependent case, and in particular clarifies the respective roles played by support geometry and distributional assumptions. Building on these insights, the full-support copula assumption can be viewed as a natural generalization of the independence setting: requiring that all combinations of latent coordinates occur ensures that no factor is functionally constrained by the others, which is precisely what enables disentanglement. However, in contrast to the independent case, this assumption alone is not sufficient unless the latent variables are supported on a \emph{bounded} domain.

This additional requirement is essential for identifiability. When the latent space is $\mathbb{R}^d$, there exist infinitely many smooth, monotone transformations that map $\mathbb{R}^d$ onto a bounded domain such as the hypercube $(0,1)^d$. For instance, applying sigmoid-type transformations along arbitrary directions yields different embeddings of an unbounded domain into the same bounded support, all of which are compatible with the same observed distribution. As a result, the geometry of the latent space cannot be uniquely recovered from data in this setting.

By contrast, when the latent variables are supported on a bounded domain with full copula support, the boundary structure of the support becomes identifiable and constrains the class of admissible transformations, leading to disentanglement up to coordinate-wise reparametrization and permutation. The independent case does not require such a boundedness assumption because independence provides strictly stronger information: it fixes not only the support geometry but also the factorization structure of the distribution, so that the residual orthogonal ambiguity left by the geometric argument can be removed statistically, through the requirement that at most one factor be Gaussian.

\subsection{Discussion on Assumption~\ref{ass:1} and Assumption~\ref{ass:2}}\label{app:dis_assumptions}

This section elaborates on the practical scope and interpretation of Asm.\ref{ass:1} and \ref{ass:2}. These assumptions are not only theoretically motivated but also well-grounded in widely accepted principles and common practices in representation learning; we therefore view them as structural abstractions that enable studying identifiability beyond the classical independence setting. In what follows, we discuss each assumption in turn, clarifying its conceptual grounding, its relationship to standard causal representation learning settings, and the conditions under which it can be expected to hold in practice.

\textbf{Assumption~\ref{ass:1} (Orthogonal Jacobian).} \quad
Asm.\ref{ass:1} should be understood not as an empirical property to be verified in a given dataset, but as a \emph{definitional criterion} for what constitutes a meaningful latent concept: each factor contributes a functionally independent mode of variation, decoupled from all others in its local effect on the observations. This view is strongly aligned with the Independent Causal Mechanisms (ICM) principle from causal inference~\citep{scholkopf2021toward}, a widely accepted foundation which posits that the mechanisms governing the components of a system operate autonomously and do not inform one another. Applied outside a strictly causal context, Asm.\ref{ass:1} can be understood as a functional, non-parametric instantiation of ICM: instead of requiring independence in a causal graph, it requires decoupling in the generative Jacobian. This connection is well-supported by the theoretical analysis in~\citet{gresele2021independent}, where orthogonal Jacobian structure was introduced precisely to capture this notion.

Eq.\eqref{eq:purejet} carries the same idea to the higher derivatives. Orthogonality
says that the factors act along distinct directions at a point; Eq.\eqref{eq:purejet}
says that this distinctness persists in how their effects change around that point. The
idea is that if we stand at any point of the domain and know exactly what happens when we
vary each latent separately, to all orders, then we already know what happens when we vary
several of them jointly. Varying them together brings nothing new. If it did, there would
be an effect belonging to a group of factors and to none of them individually, a mechanism
with no owner, and the factors would not really be independent \emph{mechanisms}. The condition
is stated on the relative frame between two mappings for convenience, but as it is imposed
over the whole class $\mathcal{F}$, it is ultimately a condition on each model taken
individually: knowing how a model's frame varies factor by factor determines how it varies
under joint variation, precisely because distinct factors act through distinct mechanisms.

Beyond its theoretical motivation, Asm.\ref{ass:1} finds direct empirical support in practical settings. \citet{wei2021orthogonal} successfully leveraged Jacobian orthogonality constraints for unsupervised disentanglement in image generation, demonstrating that such structural priors are not only theoretically meaningful but also empirically effective on realistic data. Moreover, as discussed in Sec.\ref{sec:discussion}, VAEs implicitly encourage a related structure through their factorized approximate posterior~\citep{reizinger2022embrace, allen2024unpicking}, which helps explain their empirical disentanglement ability on structured datasets.

In summary, Asm.\ref{ass:1} is best understood not as an empirical property to be verified in any given dataset, but as a \emph{structural design criterion} for what kind of representation one seeks to learn: one in which each factor's functional influence on the observations is orthogonal to that of all others.

\textbf{Assumption~\ref{ass:2} (Full-Support Copula).} \quad
Asm.\ref{ass:2} formalizes the combinatorial richness of the latent factors, capturing the idea that each factor should, in principle, be independently manipulable. Many latent variables (e.g., position, orientation, lighting) naturally vary across bounded ranges, and datasets are often explicitly constructed to explore these variations, making this assumption well-aligned with common practices in representation learning.

A fundamental distinction, however, is that Asm.\ref{ass:2} is a \emph{dataset-level property} of the support of the latent distribution, rather than a property of any underlying causal or generative model. The existence of a causal model does not, by itself, guarantee that the resulting dataset will be rich enough to satisfy it. To illustrate this, consider two latent factors $(Z_1, Z_2)$ with a causal dependency $Z_1 \to Z_2$:
\begin{itemize}[nolistsep,nosep]
    \item \emph{Deterministic mechanism}: if $Z_1 = 0$ implies $Z_2 = 0$ and
    $Z_1 = 1$ implies $Z_2 = 1$, then combinations such as $(Z_1=0, Z_2=1)$
    never occur. The dataset lacks full combinatorial coverage, violating
    Asm.\ref{ass:2}, even though a valid causal model is present.
    \item \emph{Probabilistic mechanism}: if $Z_1 = 0$ yields
    $P(Z_2=0)=0.9$ and $P(Z_2=1)=0.1$, then every combination of
    $(Z_1, Z_2)$ appears with non-zero probability. Causal dependencies exist
    and the variables are statistically dependent, yet Asm.\ref{ass:2} is
    perfectly satisfied because the joint latent space has full support.
\end{itemize}
This shows that Asm.\ref{ass:2} can be satisfied in typical causal representation learning settings, provided the dataset is sufficiently diverse. Moreover, even when natural observational data lacks certain combinations, interventions --- either explicit or through deliberate dataset design --- can restore the required combinatorial richness.

Crucially, this highlights that causal structure and Asm.\ref{ass:2} are \emph{orthogonal}: our work does not rely on any causal assumptions, but rather focuses on the combinatorial richness of the dataset, regardless of how the data was generated (causally or not). This formalizes the idea that each latent factor should correspond to a distinct, manipulable degree of freedom ("concept slider"). If two variables can never be varied separately in the observed data, they cannot be identified as separate entities without further supervision. Thus, while Asm.\ref{ass:2} defines the conditions under which disentanglement is theoretically possible, it may not be achieved in every real-world scenario, particularly if the dataset is not sufficiently rich or lacks the necessary interventional samples to make these separate degrees of freedom visible to the disentanglement model. For instance, variables such as temperature and altitude may not span all possible combinations in observed data, even though such combinations are physically plausible. So the assumption should be interpreted as a property of the underlying generative domain, not a guarantee about any finite dataset. Even so, the assumption provides a critical theoretical foundation: it delineates when disentanglement is possible, by ensuring that distinct latent factors are identifiable in principle. In other words, while actual datasets may be incomplete, Asm.\ref{ass:2} highlights the necessary richness required to make independent degrees of freedom visible to the model, grounding identifiability results in realistic, physically plausible factor spaces.

\section{Disentanglement Under General Latent Domains}\label{app:B}

In the main text, we argued that identifiability does not necessarily require strong distributional assumptions such as statistical independence. Instead, \emph{limited but appropriate global information about the latent domain} may already suffice to fully determine the generative mapping. In particular, when the latent variables are supported on a known, bounded, simply connected domain $\Omega \subset \mathbb{R}^d$, the generative process can become identifiable up to the natural rigid symmetries of that support. In the main body of the paper, this idea was instantiated through Asm.\ref{ass:2}, which restricts the latent domain (up to coordinate-wise reparameterization) to a hypercube. However, the above observation suggests that identifiability can be achieved under substantially more general geometric conditions. The purpose of this appendix is to formalize and extend this claim by studying disentanglement when the latent domain is known but not necessarily a hypercube.

For clarity, we restrict our analysis in this appendix to the class of \emph{conformal} generative mappings. This setting allows us to leverage classical results from geometric function theory and to make precise statements about how domain geometry constrains admissible transformations. In particular, for conformal mappings in dimensions $d \ge 3$, the knowledge of a small amount of global geometric information—such as the support of the latent variables or the correspondence of a few salient points—is already sufficient to uniquely determine the mapping up to rigid symmetries.

This intuition is formalized in the following proposition, which establishes identifiability under mild geometric assumptions on the latent support.

\begin{proposition} \label{prop:3}
    Let the observed data $\mathbf{x} \in \mathbb{R}^n$ lie on a $d$-dimensional manifold generated by latent variables $\mathbf{z} \in \mathbb{R}^d$, with $3 \le d \ll n$, through a smooth and invertible mapping $\mathbf{f}: \mathbb{R}^d \rightarrow \mathbb{R}^n$ that is conformal everywhere on the domain in an open subset.
    Suppose further that the latent variables follow a distribution $p_{\mathbf{z}}$ supported on a known, bounded, simply connected region $\Omega \subset \mathbb{R}^d$.
    For any $c \in \mathbb{R}^d$, there exist $u, v \in \mathbb{S}^{d-1}$ such that, denoting by $(r_1(u), r_2(u))$ and $(r_1(v), r_2(v))$ the intersections of $\Omega$ with the rays emanating from $c$, one has $r_1(u)r_2(u)\neq r_1(v)r_2(v)$. Then, the generative process $(\mathbf{f}, p_{\mathbf{z}})$ is identifiable in the sense of Def.\ref{def:1}, up to a rigid transformation that is a symmetry of $\Omega$.
\end{proposition}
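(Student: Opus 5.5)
We reduce the statement to Liouville's theorem and then use the ray-product condition to exclude inversions.

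\textbf{Step 1: the transition is conformal.} Let $(\mathbf{g},p_{\hat{\mathbf{z}}})$ be a second conformal model with the same latent support $\Omega$ and $\mathbf{f}_*p_\mathbf{z}=\mathbf{g}_*p_{\hat{\mathbf{z}}}$. Form $\mathbf{h}=\mathbf{f}^{-1}\circ\mathbf{g}$ as in \eqref{eq:transition}. By Preliminary 1 its Jacobian is $J_\mathbf{h}(u)=D_\mathbf{f}(\mathbf{h}(u))^{-1}R(u)D_\mathbf{g}(u)$. In the conformal case both diagonal factors are scalar, so $J_\mathbf{h}=\mu(u)R(u)$ with $\mu>0$ and $R(u)\in O(d)$. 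Hence $\mathbf{h}$ is a conformal diffeomorphism between open subsets of $\mathbb{R}^d$.

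Because the supports coincide and $\mathbf{h}_*p_{\hat{\mathbf{z}}}=p_\mathbf{z}$, the map $\mathbf{h}$ sends $\Omega$ bijectively onto $\Omega$. The closures also correspond: both $\mathbf{f}$ and $\mathbf{g}$ extend to a neighbourhood of the closure, as in the regularity condition. So $\mathbf{h}$ extends to a homeomorphism of $\overline{\Omega}$ carrying $\partial\Omega$ to $\partial\Omega$.

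\textbf{Step 2: apply Liouville's theorem.} Since $d\ge3$ and $\mathbf{h}$ is smooth ($C^3$ is enough) and conformal on the connected open set, Liouville's theorem \citep{blair2000inversion} makes $\mathbf{h}$ the restriction of a Möbius transformation. Every Möbius map is either a similarity $u\mapsto \lambda Ou+b$, or a similarity composed with a single inversion $\iota_{c,\rho}(u)=c+\rho^2(u-c)/\|u-c\|^2$. Because $\Omega$ is bounded, any pole of such an inversion must lie outside $\overline{\Omega}$; otherwise the image of $\Omega$ would be unbounded, contradicting $\mathbf{h}(\Omega)=\Omega$.

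\textbf{Step 3: exclude inversions (the main obstacle).} Suppose $\mathbf{h}=S\circ\iota_{c,\rho}$ with $S$ a similarity. An inversion centred at $c$ maps each ray from $c$ to itself and sends a point at distance $r$ to distance $\rho^2/r$. For a ray in direction $u$, write $r_1(u)<r_2(u)$ for the distances at which it enters and leaves $\Omega$. The image of $\Omega$ under $\iota_{c,\rho}$ then meets that ray at $\rho^2/r_2(u)$ and $\rho^2/r_1(u)$. This is the ray-product condition stated in the proposition.

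For $\iota_{c,\rho}(\Omega)$ to be a similar copy $S^{-1}(\Omega)$ of $\Omega$, the boundary incidences of the two sets must match. The plan is to make this precise through the quantity $r_1r_2$. Inversion transforms it as $r_1r_2\mapsto\rho^4/(r_1r_2)$, while a similarity rescales it uniformly, by $\lambda^2$ for the similarity ratio $\lambda$. By the hypothesis there are directions $u,v$ with $r_1(u)r_2(u)\neq r_1(v)r_2(v)$. Combining the two transformation rules should force $r_1r_2$ to be constant in the direction, which contradicts the hypothesis.

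I expect this to be the delicate step. The similarity $S$ need not fix $c$, so matching rays is not automatic. I would first handle it by noting that $S^{-1}$ maps $\Omega$ to a set whose pencil of rays from $S^{-1}$-transported centres must coincide with that of $\iota(\Omega)$. If that fails, I would argue by the invariance of cross-ratios of the collinear quadruple $(c, \text{entry}, \text{exit}, \infty)$ under Möbius maps. This is also where the ``rays emanating from $c$'' convexity-type structure is needed, so that each ray meets the boundary in exactly two points.

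\textbf{Step 4: the similarity is an isometric symmetry.} Once inversions are excluded, $\mathbf{h}=\lambda Ou+b$ with $\mathbf{h}(\Omega)=\Omega$. Preservation of the finite volume of $\Omega$ forces $\lambda=1$, so $\mathbf{h}$ is a rigid motion and a symmetry of $\Omega$. Therefore $(\mathbf{f},p_\mathbf{z})=(\mathbf{g}\circ\mathbf{h}^{-1},\mathbf{h}_*p_{\hat{\mathbf{z}}})$, which is identifiability in the sense of Def.\ref{def:1} up to that symmetry.
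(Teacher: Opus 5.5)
Your plan follows the paper's own route (App.~B.1). You show that $\mathbf{h}=\mathbf{f}^{-1}\circ\mathbf{g}$ is a conformal self-map of $\Omega$, invoke Liouville, discard the inversion case via the ray product, and reduce the affine case to a rigid symmetry; your volume argument for $\lambda=1$ is cleaner than the paper's. The gap is Step~3, and it is not just delicate: the implication you are aiming for is false. Write $S(y)=\lambda Oy+b$ and let $P_x(u)=r_1(u)r_2(u)$ be measured from the centre $x$. Comparing $\iota_{c,\rho}(\Omega)$ with $S^{-1}(\Omega)$ ray by ray (for rays meeting $\Omega$ in one interval) gives only $P_c(u)\,P_{S(c)}(Ou)=\lambda^2\rho^4$. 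This couples two centres and two directions. Even when $S(c)=c$ it does not force $P_c$ to be constant, because $O$ permutes the rays. Concretely, let $d\ge 3$, let $\varepsilon>0$ be small, and set $\Omega=\{x\neq 0:\ \bigl|\log\|x\|-\varepsilon x_1/\|x\|\bigr|<1\}$. This is a perturbed spherical shell, bounded and, since $d\ge 3$, simply connected. Take $\mathbf{h}(x)=-x/\|x\|^2$, i.e.\ $S=-I$ after the inversion in the unit sphere. A one-line check gives $\mathbf{h}(\Omega)=\Omega$, with pole $0\notin\overline{\Omega}$. Yet the ray from $0$ in direction $u$ meets $\Omega$ in $(e^{-1+\varepsilon u_1},e^{1+\varepsilon u_1})$, so $P_0(u)=e^{2\varepsilon u_1}$ is not constant; your relation holds here as $P_0(u)P_0(-u)=1$. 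Neither transported centres nor cross-ratios can repair this, because the conclusion already fails at the pole, which is the only centre the decomposition singles out.

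The paper does not supply the missing step either. Its proof simply asserts that the inversion component of a M\"obius automorphism of $\Omega$ must itself preserve $\Omega$, and this example contradicts that. In fact the example appears to defeat the statement, not just the proof. By my check, no inversion about any centre preserves this $\Omega$. Its M\"obius stabiliser is compact, since the two complementary components of $\Omega$ have disjoint closures. It therefore fixes a point of $\mathbb{H}^{d+1}$, necessarily the unique fixed point of $\mathbf{h}$. A third-order expansion in $\varepsilon$ of the outer boundary then rules out the only candidate reflections. So $\Omega$ satisfies the intended ray-product hypothesis. Meanwhile $(\mathbf{f}\circ\mathbf{h},\mathbf{h}_*p_{\mathbf{z}})$ is a second conformal model with the same support and the same observations, differing from $(\mathbf{f},p_{\mathbf{z}})$ by a non-rigid map. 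Step~3 can therefore only be closed under a stronger hypothesis. One option is to assume that every M\"obius self-map of $\Omega$ fixes $\infty$ (i.e.\ is affine), after which your Step~4 finishes the proof. The ray-product condition detects only pure inversion symmetries, so it is not enough.
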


The key theoretical ingredient underlying Prop.\ref{prop:3} is Liouville’s theorem for conformal mappings in dimensions $d>2$ \citep{blair2000inversion}. This result implies that any conformal transformation of $\mathbb{R}^d$ must be a Möbius transformation, i.e., a composition of translations, rotations, dilations, and inversions. Among these components, inversion is the only source of nonlinearity. When the latent domain $\Omega$ is bounded, simply connected, and not invariant under inversion (a property satisfied by a broad class of supports including all polytopes and most bounded smooth domains), non-rigid Möbius transformations cannot preserve $\Omega$. The geometric condition imposed in Prop.\ref{prop:3} precisely excludes this possibility by ensuring that no inversion can map $\Omega$ onto itself. As a result, the only admissible transformations are rigid motions (rotations and translations) that are symmetries of the domain.

This establishes that identifiability can be achieved under substantially weaker assumptions than those imposed in the main text. In particular, it shows that disentanglement is possible for a wide class of known latent supports—not only hypercubes, provided that the domain geometry sufficiently constrains admissible conformal symmetries. Only highly symmetric sets, such as perfect spheres, violate this condition and admit non-rigid self-transformations.

We empirically illustrate this phenomenon in Fig.\ref{fig:conformal} of the main paper and in Figs.\ref{fig:conf1} and \ref{fig:conf2} of App.\ref{app:D}, where we observe identifiability up to rigid rotations for several non-cubic latent domains.

In the following sections, we provide a proof of Prop.\ref{prop:3} along with additional geometric intuition and visualizations. We then treat separately the special case $d=2$, where conformal mappings exhibit fundamentally different behavior.

\subsection{Proof and Geometric Interpretation}\label{app:B1}

We now prove Prop.\ref{prop:3} and provide geometric intuition for the role played by the latent domain geometry.

Let $\mathbf{f}, \mathbf{g} : \mathbb{R}^d \rightarrow \mathbb{R}^n$ be smooth, invertible mappings that are conformal on an open neighborhood of a bounded, simply connected domain $\Omega \subset \mathbb{R}^d$, with $d \ge 3$. Assume that the induced data distributions coincide, i.e.,
\begin{equation}
    \mathbf{f}_* p_{\mathbf{z}} = \mathbf{g}_* p_{\hat{\mathbf{z}}},
\end{equation}
where both $p_{\mathbf{z}}$ and $p_{\hat{\mathbf{z}}}$ are supported on $\Omega$.

Since $\mathbf{f}$ and $\mathbf{g}$ are diffeomorphisms onto the same $d$-dimensional data manifold, we define
\begin{equation}
    \mathbf{h} \coloneqq \mathbf{f}^{-1} \circ \mathbf{g} : \mathbb{R}^d \rightarrow \mathbb{R}^d.
\end{equation}
By construction, $\mathbf{h}$ is a smooth bijection. Using the pushforward identity, we obtain
\begin{equation}
    \mathbf{h}_* p_{\hat{\mathbf{z}}} = p_{\mathbf{z}},
\end{equation}
which implies that $\mathbf{h}$ maps $\Omega$ onto itself almost everywhere. Hence, $\mathbf{h}$ is an automorphism of the latent domain $\Omega$.

Because conformality is preserved under inversion and composition, and both $\mathbf{f}$ and $\mathbf{g}$ are conformal, the map $\mathbf{h}$ is itself conformal on an open neighborhood of $\Omega$. The identifiability problem therefore reduces to characterizing all conformal automorphisms of $\Omega$.

For dimensions $d \ge 3$, Liouville’s theorem \citep{blair2000inversion} states that any conformal map between open subsets of $\mathbb{R}^d$ must be a Möbius transformation. Consequently, $\mathbf{h}$ must be of the form
\begin{equation}
\label{eq:mobius_general}
    \mathbf{h}(x) = b + \frac{\alpha A (x - a)}{\|x - a\|^{\varepsilon}},
\end{equation}
where $a,b \in \mathbb{R}^d$, $\alpha \in \mathbb{R} \setminus \{0\}$, $A \in O(d)$ is an orthogonal matrix, and $\varepsilon \in \{0,2\}$. Here, $a$ denotes the center of inversion, $b$ a translation, $\alpha$ a uniform dilation, and $A$ a rotation or reflection. The case $\varepsilon = 0$ corresponds to affine conformal maps (no inversion), while $\varepsilon = 2$ corresponds to Möbius transformations involving inversion.

Geometrically, Möbius transformations admit a classical interpretation via stereographic projection. Specifically, they can be obtained by projecting $\mathbb{R}^d$ onto the $d$-sphere, applying a rigid motion of the sphere, and projecting back via stereographic projection. A visualization of this construction is provided in Fig.\ref{fig:stereographic}. In this representation, inversion corresponds to a transformation that exchanges a point $a$ with infinity: when $\varepsilon = 2$, the point $x = a$ is mapped to infinity, and infinity is mapped to $b$. When $\varepsilon = 0$, infinity is preserved.

\begin{figure}[h]
\begin{center} 
\includegraphics[width=0.5\linewidth]{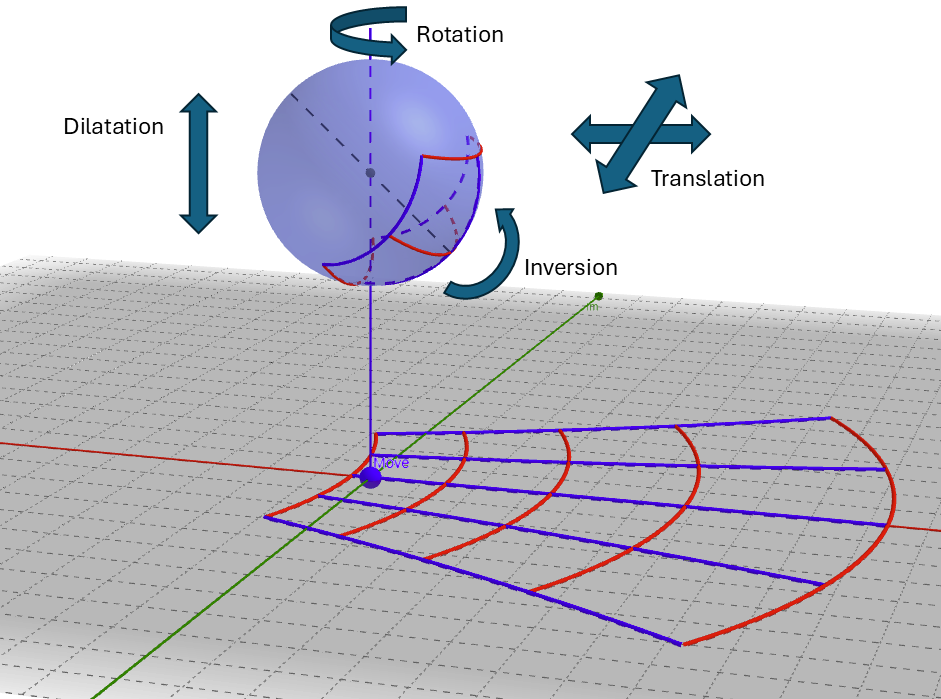}
\end{center}
\caption{Geometric interpretation of Möbius transformations via stereographic projection. Points in \(\mathbb{R}^d\) are projected onto the sphere, transformed by a rigid motion of the sphere, and mapped back to \(\mathbb{R}^d\) via inverse stereographic projection. Plot generated using a visualization tool created by Juan Carlos Ponce Campuzano on Geogebra}
\label{fig:stereographic}
\end{figure}

We now analyze which Möbius transformations can be automorphisms of the bounded domain $\Omega$.

First, consider the case $\varepsilon = 0$. In this case, $\mathbf{h}$ reduces to an affine conformal map,
\begin{equation}
    \mathbf{h}(x) = b + \alpha A (x - a).
\end{equation}
Translations, rotations, and dilations preserve the global shape of a domain. Since $\Omega$ is bounded, $\mathbf{h}$ cannot involve arbitrary translations or dilations while remaining an automorphism. Consequently, $\mathbf{h}$ must reduce to a rigid transformation belonging to the symmetry group of $\Omega$. This yields identifiability up to rigid symmetries.

We now turn to the only source of nonlinearity: the case $\varepsilon = 2$, corresponding to inversion. All other components of a Möbius transformation—translations, rotations, and dilations—preserve the qualitative shape of a domain. Therefore, if a Möbius transformation is to act as a nontrivial automorphism of $\Omega$, the inversion component must itself preserve the domain.

Without loss of generality, we may reduce the analysis to inversion with respect to the unit sphere,
\begin{equation}
    \mathcal{I}(x) = \frac{x}{\|x\|^2},
\end{equation}
since any other inversion can be obtained by composing $\mathcal{I}$ with translations, rotations, and dilations.

Inversion has two fundamental geometric effects. First, it exchanges the interior and exterior of the unit sphere: points inside the sphere are mapped outside, and vice versa. Second, the center of inversion is sent to infinity, and infinity is sent to the center. Since $\Omega$ is bounded, the center of inversion must lie outside $\Omega$; otherwise, points in $\Omega$ would be mapped to infinity, which is incompatible with $\mathbf{h}$ being an automorphism.

For inversion to preserve $\Omega$, the domain must therefore intersect the inversion sphere. Moreover, the portion of $\Omega$ inside the sphere must be exactly the inversion image of the portion outside the sphere, while the intersection of $\Omega$ with the sphere itself remains invariant. This implies that $\Omega$ must cross the inversion sphere in a highly structured way: one part of the domain lies inside, one part lies outside, and these two parts are mapped onto each other by the inversion. This construction characterizes all possible counterexamples in which a nontrivial inversion acts as a domain automorphism. We visualize such configurations in Fig.\ref{fig:counterexample}, where the interior and exterior components are exact inversion images of one another.

\begin{figure}[h]
\begin{center} 
\includegraphics[width=0.7\linewidth]{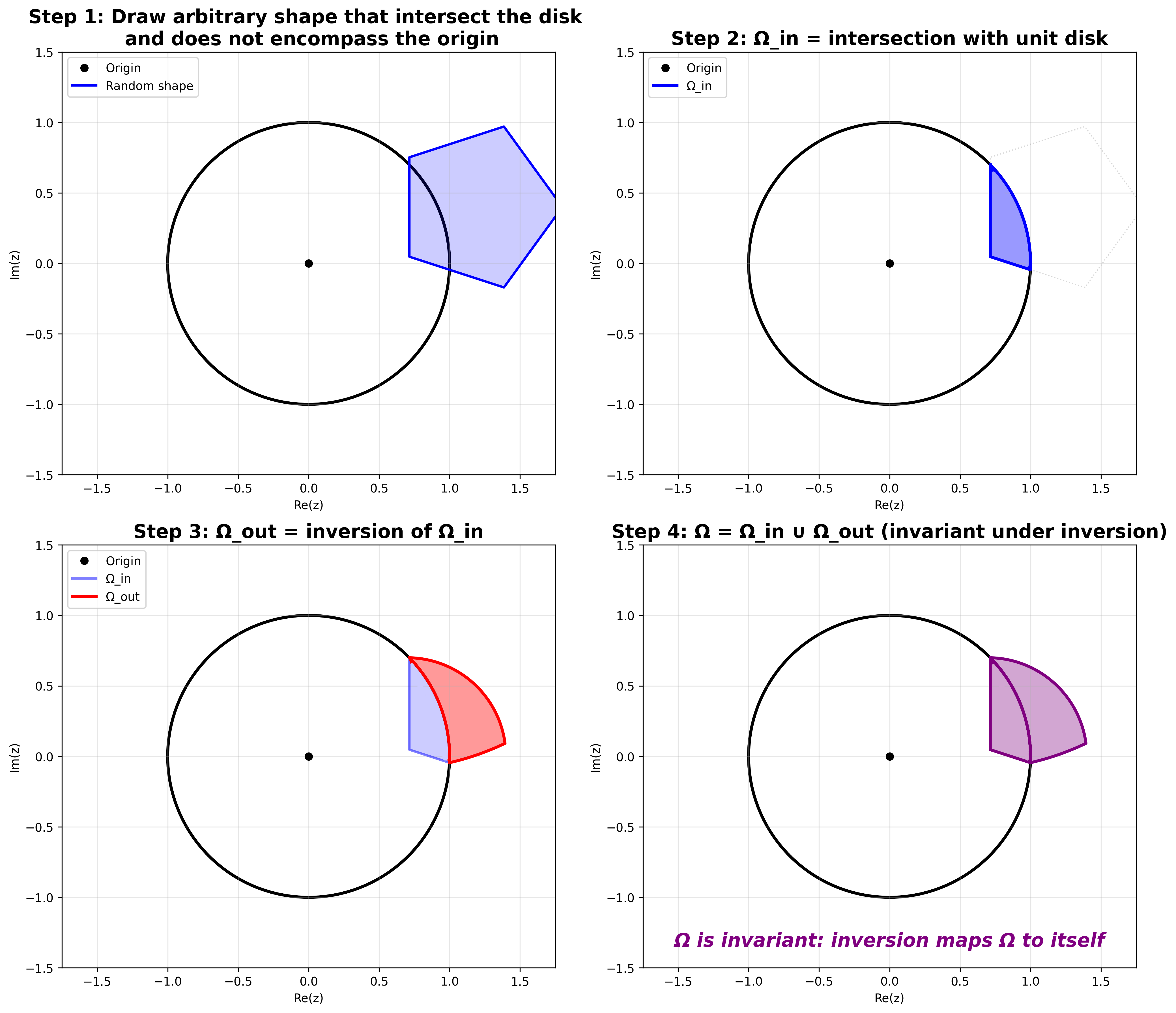}
\end{center}
\caption{Method for constructing counterexamples. Any domain exhibiting inversion symmetry can be generated in this way.}
\label{fig:counterexample}
\end{figure}

Crucially, aside from the points lying exactly on the inversion sphere, no point can remain fixed. Thus, for any other point in $\Omega$, the existence of an automorphism involving inversion requires the existence of a corresponding point on the opposite side of the inversion sphere. It is precisely this requirement that is excluded by the geometric condition of Prop.\ref{prop:3}.

Indeed, the condition in Prop.\ref{prop:3} ensures that for any potential inversion center $c$, there exist directions along which the radial structure of $\Omega$ is incompatible with inversion symmetry. Prop.\ref{prop:3} excludes all domains that are invariant under an inversion. As a result, no inversion can map $\Omega$ onto itself. This situation is illustrated in Fig.\ref{fig:conditionVisu}. 

\begin{figure}[h]
\begin{center} 
\includegraphics[width=0.7\linewidth]{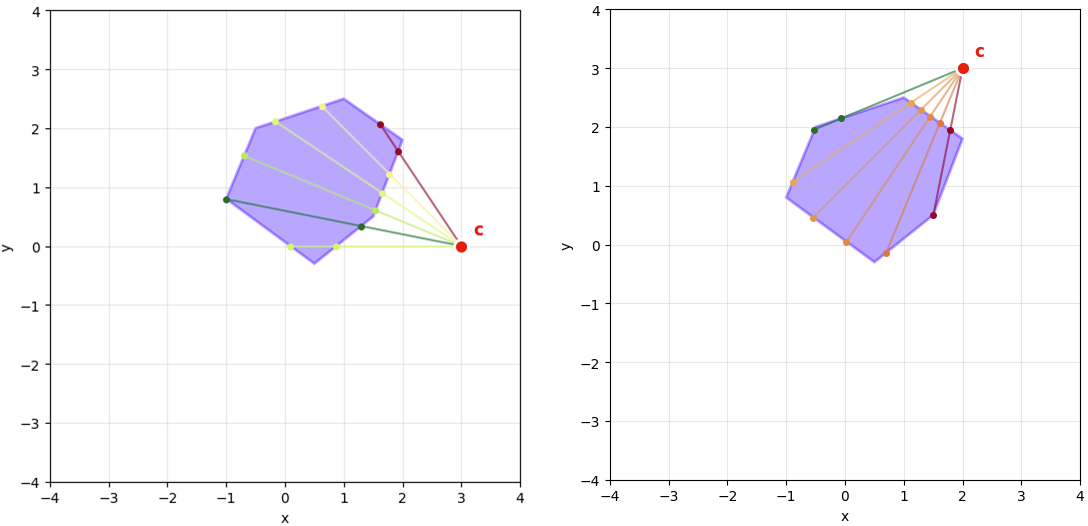}
\end{center}
\caption{The condition in Prop.\ref{prop:3} directly targets the defining geometric invariant of inversion. For a fixed center \(c\), inversion about a sphere centered at \(c\) preserves, along every ray direction \(u\), the product of the two intersection distances \(r_1(u) r_2(u)\) with the domain boundary. Domains that are invariant under inversion, such as disks or spheres, are characterized by the fact that this product is constant across all directions. Prop.\ref{prop:3} explicitly excludes this possibility by requiring the existence of at least two directions for which the radial products differ, thereby breaking the invariance required for inversion symmetry. Since any Möbius transformation involving inversion must preserve this radial structure for some center \(c\), the condition rules out all non-rigid Möbius automorphisms. Consequently, only affine conformal maps can act as automorphisms of the latent domain.}
\label{fig:conditionVisu}
\end{figure}

We conclude that the inversion case $\varepsilon = 2$ is ruled out under the assumptions of Prop.\ref{prop:3}. Therefore, any conformal automorphism of $\Omega$ must be affine, and hence a rigid symmetry of the domain. It follows that
\begin{equation}
    \mathbf{g} = \mathbf{f} \circ \mathbf{h},
\end{equation}
where $\mathbf{h}$ is a rigid transformation preserving $\Omega$. This establishes identifiability of the generative process up to the natural rigid symmetries of the latent domain, completing the proof of Prop.\ref{prop:3}.

Most latent domains do not admit invariance under inversion; only highly specific and strongly symmetric sets, such as perfect spheres or disks, possess this property. For such exceptional geometries, non-rigid Möbius symmetries may persist, leading to multiple valid generative solutions. We illustrate this phenomenon in Fig.\ref{fig:disk} with the case of a disk, whose inversion symmetry yields multiple admissible solutions, and in Fig.\ref{fig:crescent} with a crescent-shaped domain, where inversion symmetry gives rise to exactly two distinct solutions. Importantly, such cases are rare: for the vast majority of bounded, simply connected domains, inversion invariance does not hold. Consequently, identifiability extends well beyond the hypercube setting considered in the main text, demonstrating that disentanglement can be achieved for a broad class of latent supports whenever the domain geometry sufficiently constrains admissible conformal symmetries.

\begin{figure}[h]
\begin{center} 
\includegraphics[width=0.8\linewidth]{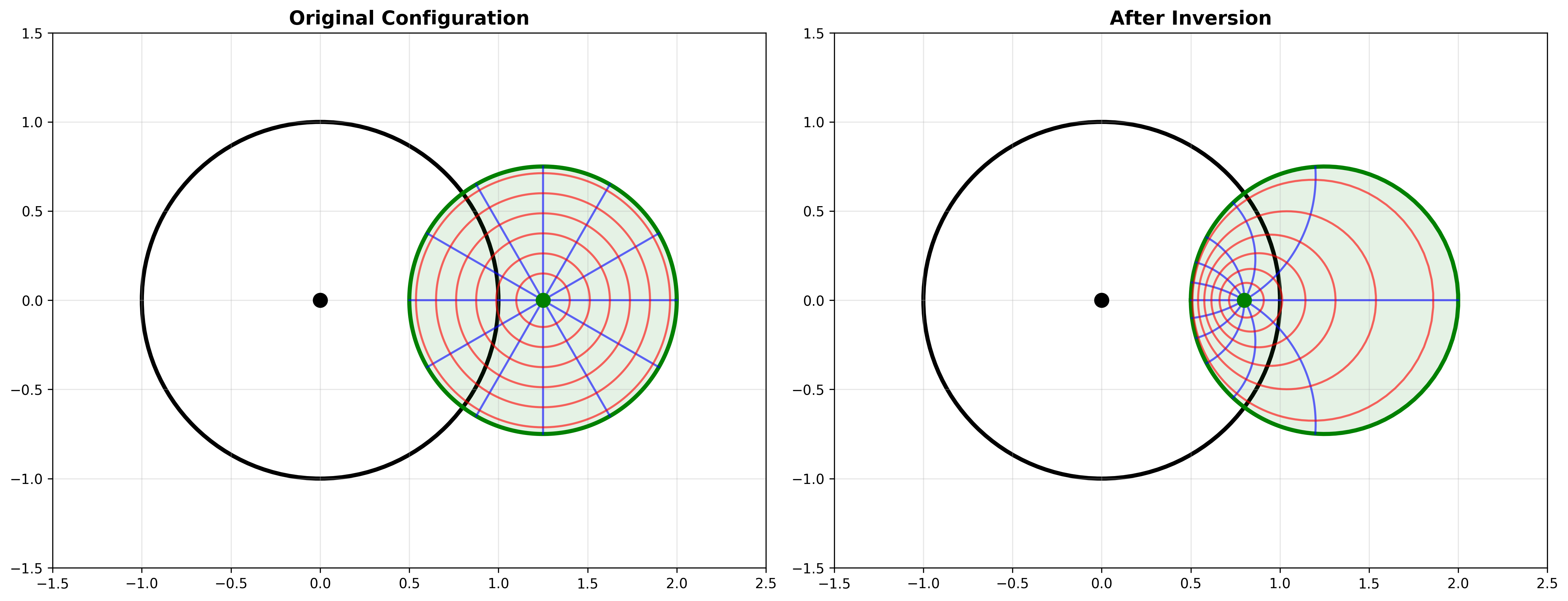}
\end{center}
\caption{Visualization of a disk automorphism induced by an inversion. The disk is mapped onto itself, confirming that the transformation is indeed an automorphism. The mapping is nonlinear and produces a visibly deformed geometry.}
\label{fig:disk}
\end{figure}

\begin{figure}[h]
\begin{center} 
\includegraphics[width=0.9\linewidth]{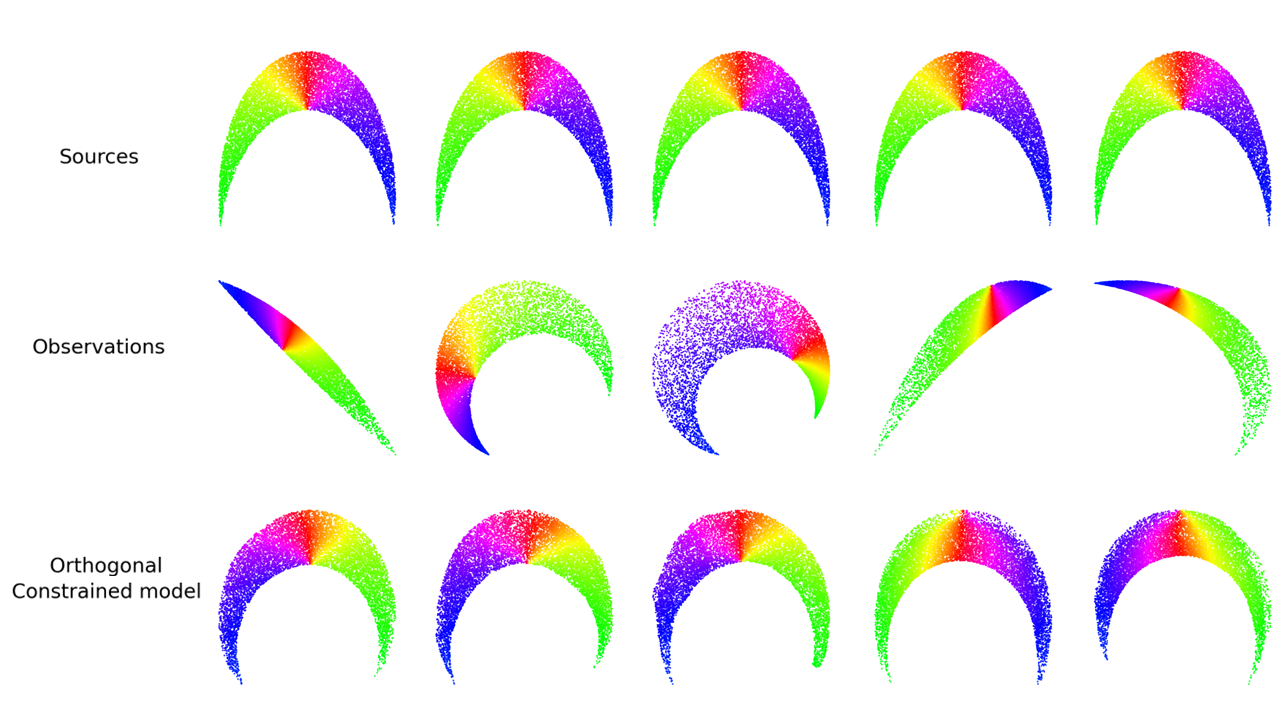}
\end{center}
\caption{Qualitative disentanglement results for crescent-shaped domains. From top to bottom: ground-truth sources sampled from the structured domain, corresponding observations, and reconstructions obtained using the orthogonally constrained model. Due to inversion symmetry, two equivalent solutions are observed.}
\label{fig:crescent}
\end{figure}

\subsection{The Two-Dimensional Case}\label{app:B2}

The two-dimensional case exhibits fundamentally different behavior from the higher-dimensional setting discussed above. This distinction arises from the markedly greater flexibility of conformal mappings in dimension $d=2$, which contrasts sharply with the rigidity imposed by Liouville’s theorem for $d \ge 3$.

In two dimensions, conformal mappings coincide with holomorphic or anti-holomorphic functions with non-vanishing derivative. As a result, the space of conformal self-maps of planar domains is infinite-dimensional, rather than being restricted to Möbius transformations as in higher dimensions. This increased flexibility is formalized by the classical \emph{Riemann Mapping Theorem} (RMT) \citep{walsh1973history}.

\paragraph{Riemann Mapping Theorem.}
The Riemann Mapping Theorem states that any simply connected, proper open subset $\Omega \subset \mathbb{C}$ (i.e., $\Omega \neq \mathbb{C}$) is conformally equivalent to the open unit disk $\mathbb{D}$. That is, there exists a bijective conformal map
\begin{equation}
    \phi : \Omega \rightarrow \mathbb{D}.
\end{equation}
Moreover, such a map is unique up to composition with a conformal automorphism of the disk, i.e., a Möbius transformation preserving $\mathbb{D}$.

This result implies that, in dimension two, any simply connected domain can be conformally deformed into a disk, regardless of its geometric shape. Consequently, unlike the case $d \ge 3$, domain geometry alone is insufficient to restrict conformal symmetries to rigid motions. The group of conformal automorphisms of the disk is infinite-dimensional, allowing for highly non-rigid transformations.

At first glance, this suggests that identifiability under conformal generative models should fail in two dimensions. However, this conclusion is overly pessimistic in the present setting, for several important reasons.

\paragraph{1) Role of boundary behavior.}
The Riemann Mapping Theorem applies to \emph{open} domains and makes no guarantees about the behavior of the conformal map on the boundary $\partial \Omega$. In general, the conformal equivalence provided by the RMT does \emph{not} extend continuously, let alone smoothly, to the boundary. As a result, many conformal maps that exist in the interior of a domain are incompatible with prescribed boundary correspondences.

In our setting, the generative mapping $\mathbf{f}$ is assumed to be smooth and invertible on an open neighborhood of the latent domain, which implicitly constrains its behavior near and on the boundary. When the boundary geometry of $\Omega$ is taken into account—particularly when it contains salient geometric features such as corners, edges, or extremal points—the class of admissible conformal automorphisms is drastically reduced.

\paragraph{2) Fixing boundary points and rigidity.}
A classical result in complex analysis states that a conformal automorphism of the unit disk is uniquely determined by the images of three distinct boundary points. Equivalently, any conformal automorphism of a planar domain that fixes three non-collinear points must be the identity.

This observation has direct implications for identifiability. If the latent domain $\Omega$ possesses three or more geometrically distinguished boundary points—such as the vertices of a triangle, the corners of a square or rectangle, or other extremal points—then any conformal automorphism preserving these points is necessarily rigid. In such cases, the freedom predicted by the Riemann Mapping Theorem disappears once boundary correspondence is enforced.

For example, if $\Omega$ is a polygonal domain, the interior angles at its vertices are conformal invariants. Any conformal automorphism that preserves the domain must therefore map vertices to vertices while preserving angles, which leaves only rigid motions as admissible transformations. Similar reasoning applies to domains with sufficiently rich boundary structure.

\paragraph{Implications for identifiability in practice.}
Although two-dimensional conformal geometry admits greater theoretical flexibility, the combination of smoothness, invertibility, and implicit boundary constraints imposed by the generative model typically rules out non-rigid conformal automorphisms. As a result, even in $d=2$, identifiability often holds up to rigid transformations for most practically relevant latent domains.

In this sense, the two-dimensional case represents a boundary regime: while conformal maps are theoretically more expressive, the conditions required for non-identifiability are fragile and rarely satisfied in structured or bounded domains. Thus, once boundary regularity or correspondence is taken into account, the identifiability conclusions obtained for $d \ge 3$ extend, in practice, to the two-dimensional setting as well.

This completes the discussion of the $d=2$ case and highlights that the apparent loss of rigidity predicted by the Riemann Mapping Theorem does not fundamentally undermine identifiability in the present framework.

\section{Experimental Setup}\label{app:C}

This appendix details the experimental setup used throughout the paper, including dataset generation, ground-truth mixing functions, model architectures, training objectives, and evaluation metrics. All experiments are implemented in Python using PyTorch \citep{paszke2019pytorch}, leveraging automatic differentiation to compute full Jacobians and enforce orthogonality constraints during training.

\subsection{Datasets and Generative Processes}\label{app:C1}

To make the dataset for the experiments we start by generating data sources. We consider latent variables $\mathbf{z} \in \mathbb{R}^d$ with $d \in \{3,6,9\}$ and generate datasets under two distinct regimes: independent and dependent factors.

In the \textbf{independent setting}, latent variables are sampled i.i.d.\ from a uniform distribution on the hypercube,
\begin{align}
    \mathbf{z} \sim \mathcal{U}([0,1]^d).
\end{align}

In the \textbf{dependent setting}, we generate complex statistical dependencies while preserving Asm.\ref{ass:2} (full-support copula). We first sample $\boldsymbol{\epsilon} \sim \mathcal{N}(0,I)$ and transform it using an untrained autoregressive normalizing flow composed of 20 RealNVP-style coupling layers \citep{dinh2016density}. A coordinate-wise sigmoid nonlinearity maps the samples to $[0,1]^d$. To ensure full support, we mix this distribution with a uniform component:
\begin{align}
p_{\mathbf{z}} = \alpha\, p_{\text{NF}} + (1-\alpha)\,\mathcal{U}([0,1]^d),
\end{align}
with $\alpha \in [0,1]$. This guarantees a minimum density of $1-\alpha$ everywhere on the hypercube, hence full support. Typically $\alpha=0.75$ is chosen. This construction explicitly violates independence while satisfying Asm.\ref{ass:2}. For each configuration, we sample 10\,000 latent points.

Once we have the sources, latent variables are then mapped to observations $\mathbf{x} \in \mathbb{R}^n$ using smooth, invertible transformations satisfying Asm.\ref{ass:1}.

We consider two classes of mixing functions. First, we use \textbf{conformal mappings} based on generalized Möbius transformations,
\begin{align}
f(\mathbf{z}) = \mathbf{b} + \frac{\alpha\, A(\mathbf{z}-\mathbf{a})}{\|\mathbf{z}-\mathbf{a}\|^{\varepsilon}},
\end{align}
where $A$ is orthogonal, $\alpha>0$, $\mathbf{a},\mathbf{b}\in\mathbb{R}^d$, and $\varepsilon=2$. This choice yields non-isometric but conformal transformations, preserving angles while allowing local scaling.

Second, to test identifiability beyond the conformal case, we construct a broad class of nonlinear mixing functions whose Jacobians are of the form $J_f(\mathbf{z}) = Q(\mathbf{z})D(\mathbf{z})$, but which are not globally conformal. These mappings are generated compositionally, exploiting the fact that the composition of $QD$ transformations remains $QD$ if composed block-wise.

Concretely, we partition the latent space into blocks of three dimensions. Within each block, we first apply a nonlinear two-dimensional transformation acting on the first two coordinates, chosen among polar, elliptic, parabolic, or rigid transformations. This 2D mapping is then composed with a 2D M\"obius transformation, yielding a flexible yet analytically tractable deformation with orthogonal Jacobian structure. The resulting 2D surface is subsequently embedded into three dimensions, either by directly appending the third coordinate or by constructing a surface of revolution around it. In the latter case, a point $(u,v,\phi)$ is mapped according to
\begin{align}
x = X(u,v)\cos\phi,\qquad
y = X(u,v)\sin\phi,\qquad
z = Z(u,v),
\end{align}
where $(X(u,v),Z(u,v))$ denotes the output of the preceding 2D transformation. Each three-dimensional block is then further composed with a 3D M\"obius transformation. After processing all blocks independently, we apply an additional global M\"obius transformation across all dimensions to induce cross-block interactions and ensure full mixing. At each stage, transformation parameters are sampled within bounded ranges to avoid singularities (e.g., degeneracies associated with M\"obius inversions), and intermediate outputs are normalized to ensure comparable scales across dimensions. Examples of transformations from the hypercube are provided in Fig.\ref{fig:3D}

\begin{figure}[h]
\begin{center} 
\includegraphics[width=0.9\linewidth]{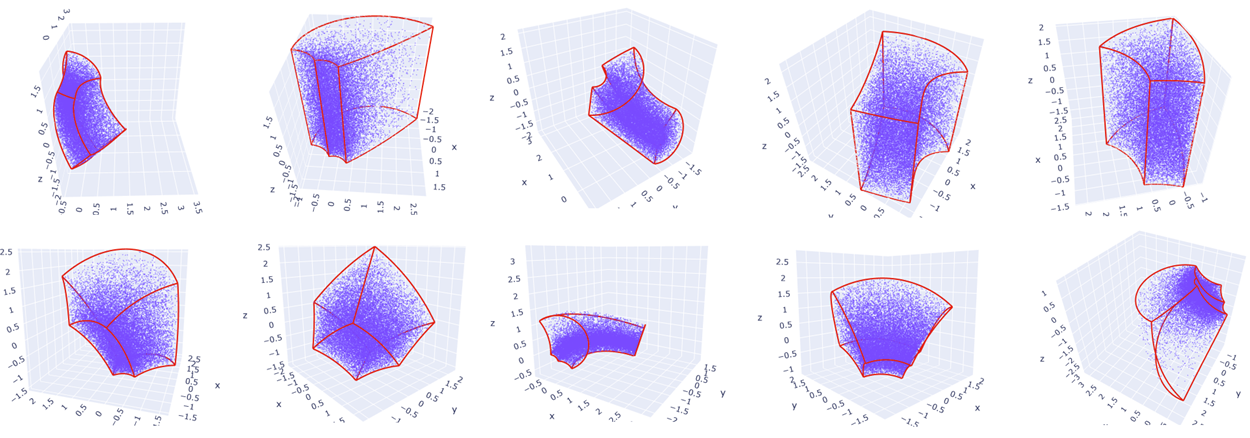}
\end{center}
\caption{Example of transformations in 3D.}
\label{fig:3D}
\end{figure}

This construction yields highly expressive nonlinear mixing functions with orthogonal Jacobians that go far beyond simple coordinate reparameterizations such as spherical or cylindrical coordinates. While M\"obius transformations alone form a relatively restricted subclass, their block-wise and hierarchical composition produces a much larger family of admissible $QD$ mappings. As a result, these generative processes are substantially more challenging to disentangle than purely conformal ones, providing a stringent empirical test of the proposed identifiability theory.

\subsection{Evaluation Metrics}\label{app:C2}

We evaluate both density estimation accuracy and disentanglement quality.

Since the true data-generating distribution is known, we compute the \textbf{Kullback--Leibler divergence} between the true distribution and the learned model by evaluating likelihoods using the inverse ground-truth mapping. Concretely, since we have access to the ground-truth mixing function $\mathbf{f}$ and can compute its Jacobian via automatic differentiation, the true data log-density is tractable at any point. The learned normalizing flow provides a tractable model density. The KL divergence is then estimated by a standard Monte Carlo average over samples from the true distribution. This metric purely assesses goodness of fit and is not used to measure disentanglement. All models achieve comparable KL values, confirming successful density estimation.

Disentanglement is assessed using the \textbf{Mean Correlation Coefficient (MCC)} \citep{khemakhem2020variational}. We compute the pairwise correlation matrix between ground-truth latent variables and recovered representations, solve a maximum-weight assignment problem, and report the mean of the matched correlations. We use Spearman correlation \citep{hauke2011comparison}, which is invariant to monotonic coordinate-wise reparameterizations, consistent with Def.\ref{def:1}.

We also report the \textbf{nonlinear Amari distance} introduced by \citet{gresele2021independent} that directly compares the Jacobians
of the true mixing and the learned unmixing. Given the true mixing $f$ and learned unmixing $g$, it is defined as
\begin{align}
d_{\text{n-Amari}}(g,f)
=
\mathbb{E}_{\mathbf{x}\sim p_{\mathbf{x}}}
\Big[
d_{\text{Amari}}\big(J_g(\mathbf{x})\,J_f(f^{-1}(\mathbf{x}))\big)
\Big].
\end{align}
where $d_{\text{Amari}}$ is the classical Amari distance metric used in linear ICA \citep{amari1995new}.
This metric equals zero if and only if $g\circ f$ differs from the identity by a permutation and coordinate-wise reparameterization (i.e., the product of Jacobians is a scale and permutation matrix), exactly matching the equivalence class of Def.\ref{def:1}. Together, MCC and nonlinear Amari distance quantify whether the learned representation recovers the true factors up to the admissible ambiguities.

\subsection{Models, Architectures, and Training Objectives}\label{app:C3}

Given observations $\mathbf{x}$, the objective is to recover latent representations corresponding to the unknown sources, without access to the true generative factors. To this end, we train expressive normalizing flows on the observed data and evaluate whether the resulting latent variables exhibit a disentangled structure.

Our main architecture is a \textbf{Residual Flow} \citep{chen2019residual}, defined by the transformation
\begin{align}
\mathbf{y} = f(\mathbf{x}) = \mathbf{x} + h(\mathbf{x}),
\end{align}
where $h:\mathbb{R}^d \to \mathbb{R}^d$ is a neural network constrained to be Lipschitz-continuous, which guarantees invertibility and ensures that the Jacobian
\begin{align}
J_f(\mathbf{x}) = I + J_h(\mathbf{x})
\end{align}
is full and expressive. Residual flows are universal density approximators and allow for highly flexible transformations while retaining tractable log-determinants.

We use 64 residual flow layers, each parameterized by a 3-layer multilayer perceptron with hidden dimension 128. Models are trained using Adam with learning rate $10^{-3}$ and batch size 256 until convergence.

We consider two settings depending on whether the latent sources are independent or dependent. In the \emph{independent} setting, the base distribution of the residual flow is a logistic distribution with diagonal scale matrix,
\begin{align}
p_{\text{base}}(\mathbf{z}) = \text{Logistic}(\mathbf{0}, I),
\end{align}
which provides full support on $\mathbb{R}^d$ and matches standard assumptions in nonlinear ICA.

In the \emph{dependent} setting, we allow the base distribution to be complex and learned. Specifically, we train an additional normalizing flow to model the prior distribution. This prior flow consists of three affine coupling layers, each implemented by a 3-layer MLP with hidden size 64. To guarantee full support and avoid degeneracies, we define the base distribution as a mixture
\begin{align}
p_{\text{base}}
=
\beta\, p_{\text{logistic}}
+
(1-\beta)\, p_{\text{prior NF}},
\qquad \beta \in (0,1).
\end{align}
This construction exploits the fact that pushforwards preserve mixtures:
\begin{align}
f_\#\!\left(\beta P + (1-\beta)Q\right)
=
\beta\, f_\# P + (1-\beta)\, f_\# Q.
\end{align}
As a result, the learned latent distribution is a mixture between a flexible learned component and a simple reference distribution. Intuitively, after applying a sigmoid map, the logistic component corresponds to a uniform distribution on $[0,1]^d$, ensuring non-vanishing density everywhere. This guarantees full support in the latent space and therefore enforces Asm.\ref{ass:2} inherently during training.

It is important to note that the base distribution lives on $\mathbb{R}^d$, whereas the true sources are supported on $[0,1]^d$. Consequently, even in the ideal case, recovery is only possible up to coordinate-wise reparameterizations. The learned latent variables are therefore not expected to match the source distribution directly. Sigmoid mappings to $[0,1]^d$ are applied solely for visualization purposes and are never used during training.

Unconstrained models are trained by maximizing the standard log-likelihood
\begin{align}
\log p(\mathbf{x}) = \log p(g(\mathbf{x})) + \log\lvert \det J_g(\mathbf{x}) \rvert.
\end{align}
Orthogonally constrained models use the regularized objective proposed by \citet{gresele2021independent},
\begin{align}
\mathcal{L}
=
\log p(\mathbf{x}) - C_{\text{IMA}},
\qquad
C_{\text{IMA}}
=
\sum_{i=1}^d \log \|[J_{g^{-1}}]_i\|
-
\log\lvert \det J_{g^{-1}} \rvert.
\end{align}
This penalty is non-negative and vanishes if and only if the Jacobian is of the form $QD$, i.e., its columns are mutually orthogonal up to scaling. Geometrically, the columns of the Jacobian $J_g$ correspond to the partial derivatives $\partial g / \partial s_i$. The determinant $\lvert J_g \rvert$ measures the volume of the $d$-dimensional parallelepiped spanned by these vectors, while the product of their norms corresponds to the volume of an axis-aligned box with side lengths $\|\partial g / \partial s_i\|$. These two volumes coincide if and only if the columns are orthogonal \citep{gresele2021independent}. Minimizing $C_{\text{IMA}}$ therefore explicitly pushes the Jacobian toward an orthogonal-diagonal structure.

In our experiments, orthogonality is enforced by recovering the full Jacobian of the learned unmixing $g$ via automatic differentiation by feeding canonical basis vectors through vector-Jacobian products using PyTorch \texttt{autograd}, and then applying the CIMA regularization term to the resulting columns. This procedure guarantees that the orthogonality condition is effectively enforced during training and that no confounding factors are introduced. While this allows us to precisely control the conditions and observe the effect of orthogonality, it requires computing the full $d \times d$ Jacobian at each training step, which scales poorly with the dimensionality of the latent space and renders direct application to high-dimensional observations such as images computationally prohibitive. In practice, efficiently enforcing orthogonal Jacobians in a scalable way is still an open research question, one that goes beyond the scope of this paper. 

Overall, we train expressive residual flows that accurately model the data distribution. In the orthogonal setting, the additional regularization enforces a $QD$ Jacobian structure, aligning the learned transformation with the assumptions of our theoretical framework. Despite the complexity of the data-generating process, this approach yields disentangled latent representations up to coordinate-wise reparameterization.

\subsection{Boundary Behavior and Optimization Issues}\label{app:C4}

In theory, identifiability requires the learned latent distribution to exactly match the assumed prior and, in particular, to satisfy Asm.\ref{ass:2} on the support of the latent variables. In practice, however, optimization and finite model capacity can prevent this condition from being met. Even when using a prior with full support, training may converge to suboptimal solutions in which the learned latent domain folds onto itself. Such configurations can achieve relatively high likelihood, remain close to the target prior, and yet violate the assumptions required by the theory. Once trapped in these configurations, continued training typically results in further local deformations of the latent space rather than convergence to the true prior, explaining most of the observed failure cases.

This also explains the observed degradation in MCC for higher latent dimensions ($d=9$): as dimensionality increases, the $C_{\text{IMA}}$ loss becomes harder to satisfy globally, and the optimization landscape grows more complex, making it more likely to converge to solutions where the latent domain is locally consistent with orthogonality but globally distorted. This is a limitation of the practical implementation rather than of the theory itself.

This gap highlights a fundamental difference between the theoretical setting, which assumes exact matching of the prior, and practical optimization, where approximate matching may be insufficient. To better align practice with theory, we introduce an additional mechanism to explicitly enforce correct domain geometry. The key observation is that for smooth invertible mappings, the boundary of a domain must be mapped to the boundary, and interior points must be mapped to interior points. Consequently, by enforcing correct behavior on the boundary, one indirectly constrains the entire interior of the domain.

Operationally, we identify samples lying close to the empirical boundary of the latent domain and introduce an auxiliary loss that penalizes mappings sending these points outside the expected support. This boundary-aware regularization encourages the learned transformation to respect the global geometry of the domain and prevents self-intersections and holes. While this procedure requires mild prior knowledge about the data domain, it is only used to ensure that the assumptions of the identifiability theorem are met. Importantly, the method can still succeed without this regularization, but in such cases training is more prone to becoming trapped in suboptimal solutions that fall outside the theoretical regime we aim to validate empirically.

\subsection{Experiments with Varying Latent Domains}\label{app:C5}

As discussed in Sec.\ref{sec:identifiability} and further detailed in App.\ref{app:B}, when the latent support is known, the generative process becomes identifiable, up to the natural rigid symmetries of that support. To support that analysis, we perform additional experiments where the latent support is known but non-cubical. Latent variables are sampled uniformly from various bounded domains and mixed using conformal Möbius transformations. For each configuration, 10\,000 samples are generated. The task of the model is to recover the original latent structure solely from the observations.

For these experiments, we use Neural Spline Flows \citep{durkan2019neural}, which parameterize monotonic rational-quadratic spline transformations and provide high expressivity with stable training. The model consists of 12 flow layers, each implemented by a 3-layer MLP with hidden size 256, using 27 spline bins and a tail bound of 12.

To avoid optimization issues associated with sharp, bounded supports, we design a shape-aware base distribution that matches the geometry of the true latent domain. Specifically, the base density is defined as a soft-edged distribution whose density is proportional to
\begin{align}
\exp\!\left(
-\frac{\mathrm{dist}(\mathbf{z},\Omega)^2}{2\varepsilon^2}
\right),
\end{align}
where $\Omega$ denotes the target latent domain and $\varepsilon$ controls boundary sharpness. For large values of $\varepsilon$, the distribution has very smooth edges and provides gradients far outside the domain, while for small $\varepsilon$ it approaches a uniform distribution over $\Omega$. Points lying outside the domain experience strong gradients pulling them inward. During training, we progressively decrease $\varepsilon$, making the boundary increasingly sharp and, at the end of training, enforcing the target domain almost exactly. Orthogonality regularization is applied as in previous experiments.

Overall, this setup corresponds to a standard Neural Spline Flow trained with a specially designed base distribution encoding the known latent support, combined with the same orthogonality regularization used in previous experiments. This controlled setting allows us to systematically investigate the role of domain knowledge in identifiability and to empirically validate the theoretical predictions under varying and nontrivial latent geometries.

\section{Additional Results and Experiments}\label{app:D}

In this appendix, we present additional quantitative and qualitative results complementing the experiments reported in the main paper.

\subsection{Quantitative Comparison}\label{app:D1}

The primary goal of our experiments was to validate our theoretical results, not to establish normalizing flows as state-of-the-art disentanglement models. Flows were used because they satisfy the conditions of the propositions. Since VAEs already incorporate an implicit orthogonal bias (Sec.\ref{sec:discussion}), including them as a baseline was less central to our theoretical validation. Nonetheless, we provide here a comparison for $d=6$ with independent latent variables, since the VAE prior imposes independent terms, making this the most natural setting for such a comparison.

Our initial choice of Amari distance and MCC was deliberate and aligned with prior work \citep{gresele2021independent}, where these metrics are commonly used and sufficient to assess whether latent factors are successfully recovered. To further strengthen our evaluation, we additionally report standard disentanglement metrics: FactorVAE \citep{kim2018disentangling}, MIG \citep{chen2018isolating}, SAP \citep{kumar2017variational}, and DCI \citep{eastwood2018framework}.

Results are reported in Tab.\ref{tab:metrics_avg}. As expected from the analysis in Sec.\ref{sec:discussion}, VAEs achieve stronger disentanglement than unconstrained flows, likely due to their inherent orthogonal bias, but perform worse than constrained normalizing flows, likely due to the information preference property and the detrimental MI term. Across all additional metrics, the conclusions are consistent with those drawn from Amari distance and MCC, confirming that the learned representations are well disentangled and supporting the central claims of this work.

$\beta$-TCVAE obtains results closer to constrained NFs than standard VAEs, which is consistent with our explanation in Sec.\ref{sec:discussion}. Indeed, $\beta$-TCVAE reduces or removes the MI term while retaining (i) the total correlation penalty and (ii) a factorized posterior, thereby alleviating posterior collapse without affecting the two mechanisms responsible for disentanglement: global independence and local orthogonality. This further supports our claim that disentanglement does not arise from MI minimization, but from the joint imposition of these two inductive biases, and that they can be made explicit and transferred beyond VAEs through orthogonality regularization.
This also clarifies the comparison with unconstrained NFs: despite sharing the same total correlation pressure as VAEs, they lack the implicit orthogonality constraint induced by the factorized posterior, explaining why adding an explicit orthogonality regularization is sufficient to recover strong disentanglement.


\begin{table}[ht]
\centering
\caption{Averaged disentanglement metrics across experiments.}
\begin{tabular}{lcccc}
\toprule
Metric d=6, ind. & NF Unconstrained & NF Constrained (Ortho.) & VAE & $\beta$-TCVAE \\
\midrule
MCC\_spearman & $0.7091 \pm 0.0492$ & $0.9575 \pm 0.0544$ & $0.8213 \pm 0.0868$ & $0.8880 \pm 0.0759$ \\
MCC\_pearson & $0.7086 \pm 0.0496$ & $0.9510 \pm 0.0545$ & $0.8138 \pm 0.0846$  & $0.8799 \pm 0.0785$ \\
MIG & $0.0760 \pm 0.0389$ & $0.6138 \pm 0.1429$ & $0.2011 \pm 0.0907$ & $0.3279 \pm 0.1201$\\
SAP & $0.0266 \pm 0.0133$ & $0.1356 \pm 0.0236$ & $0.0609 \pm 0.0245$ & $0.1078 \pm 0.0348$\\
DCI\_D & $0.1474 \pm 0.0596$ & $0.8887 \pm 0.1346$ & $0.4819 \pm 0.1536$ & $0.6766 \pm 0.1665$\\
DCI\_C & $0.1545 \pm 0.0623$ & $0.8898 \pm 0.1326$ & $0.5106 \pm 0.1299$&  $0.6824 \pm 0.1469$\\
DCI\_I & $0.9753 \pm 0.0049$ & $0.9978 \pm 0.0019$ & $0.9589 \pm 0.0456$& $0.9734 \pm 0.0335$ \\
FactorVAE & $0.7195 \pm 0.1279$ & $0.9770 \pm 0.0629$ & $0.8463 \pm 0.1182$ & $0.9040 \pm 0.0893$\\
\bottomrule
\end{tabular}
\label{tab:metrics_avg}
\end{table}

\subsection{Qualitative Results}\label{app:D2}

Figs.\ref{fig:ind1} and~\ref{fig:ind2} show representative reconstructions of latent variables in the case of \emph{independent} sources, comparing unconstrained normalizing flows with their orthogonally constrained counterparts. Similarly, Figs.\ref{fig:dep1} and~\ref{fig:dep2} report qualitative results for the \emph{dependent} sources setting.

Across all cases, the results are consistent with the quantitative boxplot comparisons presented in Fig.\ref{fig:boxplots}. Models trained with orthogonality constraints on the Jacobian are generally able to recover the underlying sources up to the expected ambiguities, whereas unconstrained models, while successfully learning the data distribution, typically fail to disentangle the true generative factors. Beyond confirming the behavior observed in the main experiments, these results demonstrate that our approach is not limited to conformal transformations or to independent source models, as considered in much of the previous literature. Instead, it successfully disentangles substantially more complex generative processes, including dependent factors and highly nonlinear, non-conformal mixings. This highlights that enforcing an orthogonal Jacobian is not merely an auxiliary regularization or architectural constraint, but rather captures a fundamental structural property of latent factors underlying identifiability in these settings.

In addition, we report results for latent domains with shapes other than the hypercube, demonstrating that our approach extends beyond standard axis-aligned supports. The experimental setup for these experiments is described in App.\ref{app:C5}, and the corresponding qualitative results are shown in Figs\ref{fig:conf1} and~\ref{fig:conf2}. As predicted by the theory (Sec.\ref{sec:identifiability} and App.\ref{app:B}), the learned representations recover the sources up to the rigid symmetries of the domain, namely global rotations and reflections. This confirms that identifiability holds whenever the latent support is known, even for nontrivial geometries.

\newpage

\begin{figure}[h]
\centering
\includegraphics[height=0.95\textheight]{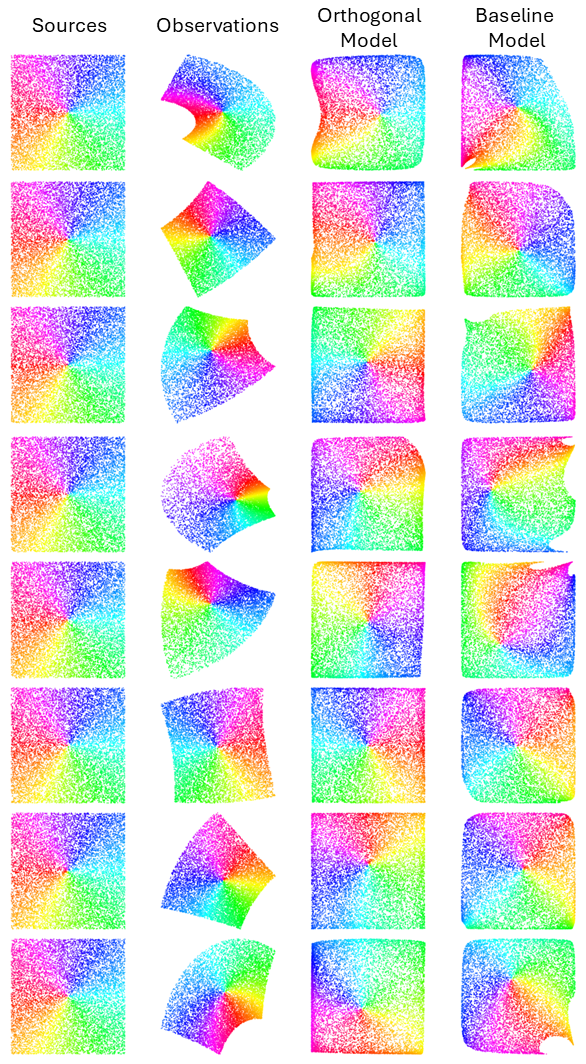}
\caption{Qualitative disentanglement results for independent sources. From left to right: ground-truth sources, observed variables, latent reconstructions obtained with the orthogonally constrained model, and latent reconstructions obtained with the unconstrained model. Reconstructed latents are visualized after a sigmoid transformation for clarity.}
\label{fig:ind1}
\end{figure}

\begin{figure}[h]
\centering
\includegraphics[height=0.95\textheight]{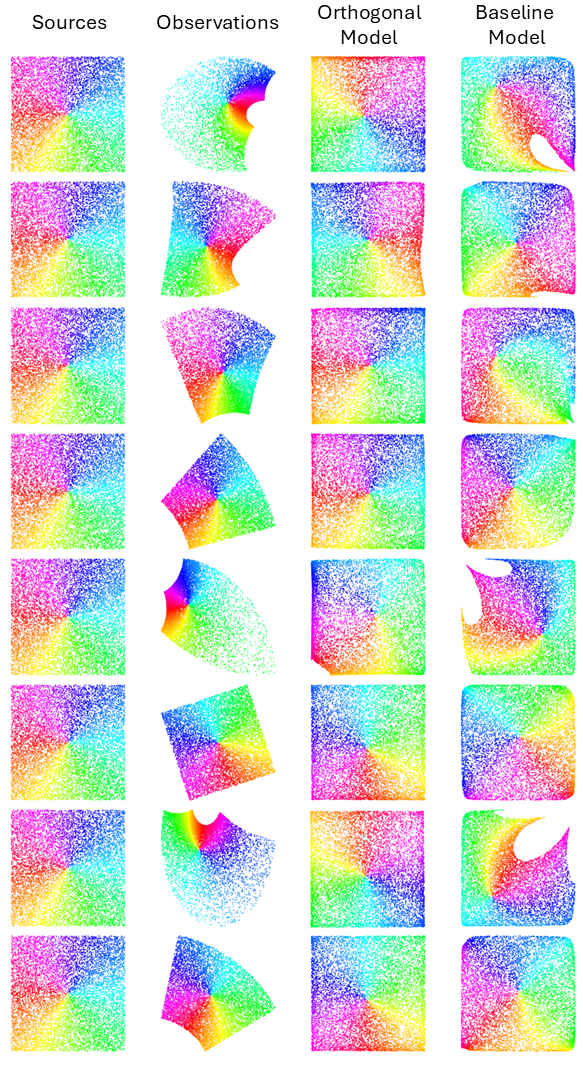}
\caption{Additional qualitative results for independent sources. Columns correspond to ground-truth sources, observations, reconstructions from the constrained model, and reconstructions from the unconstrained model, respectively. Reconstructions are shown after sigmoid reparameterization.}
\label{fig:ind2}
\end{figure}

\begin{figure}[h]
\centering
\includegraphics[height=0.95\textheight]{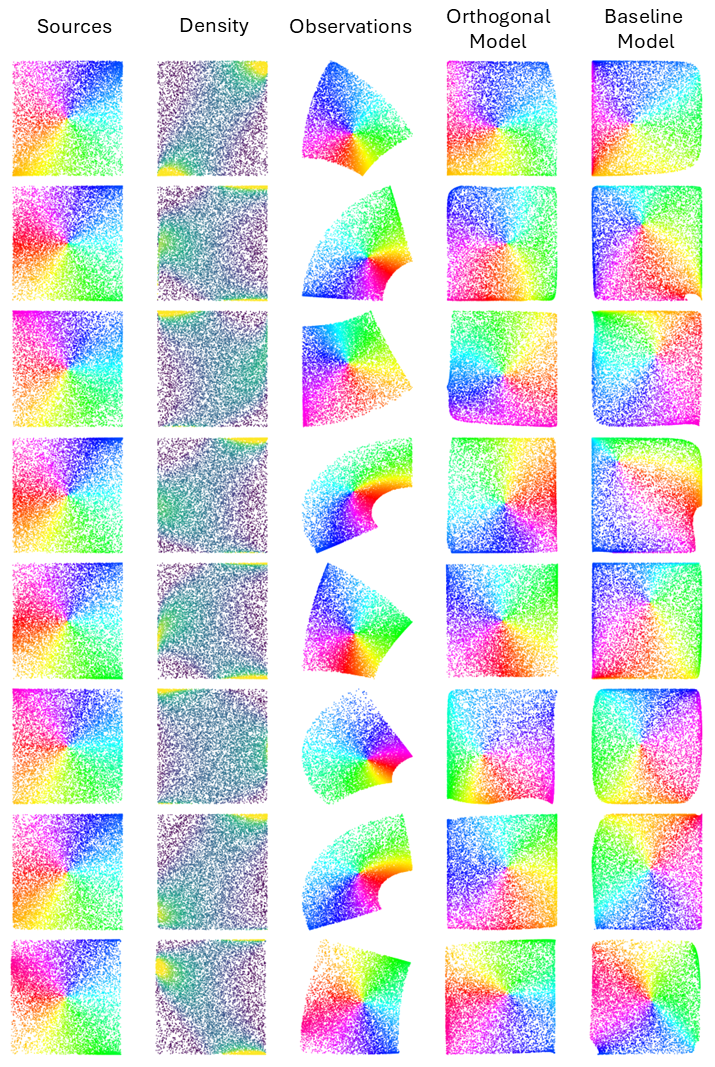}
\caption{Qualitative disentanglement results for dependent sources. From left to right: ground-truth sources, source density, observed variables, latent reconstructions obtained with the orthogonally constrained model, and latent reconstructions obtained with the unconstrained model. Reconstructions are visualized after sigmoid transformation.}
\label{fig:dep1}
\end{figure}

\begin{figure}[h]
\centering
\includegraphics[height=0.95\textheight]{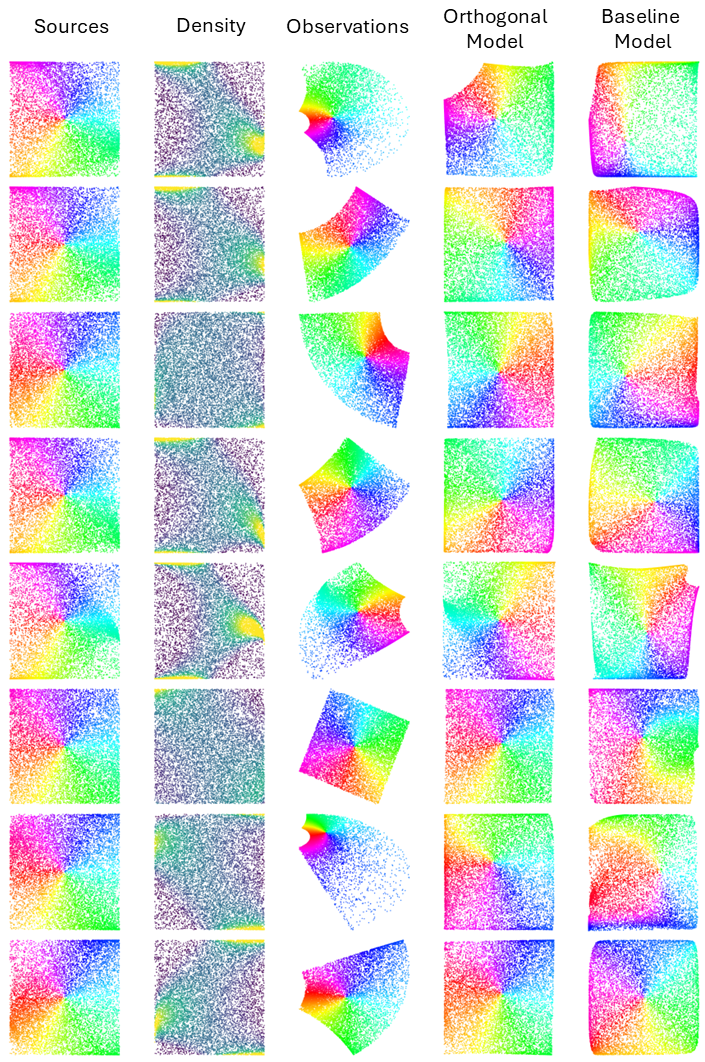}
\caption{Additional qualitative results for dependent sources. Columns show ground-truth sources, source density, observations, and latent reconstructions from the constrained and unconstrained models, respectively.}
\label{fig:dep2}
\end{figure}

\begin{figure}[h]
\centering
\includegraphics[height=0.95\textheight]{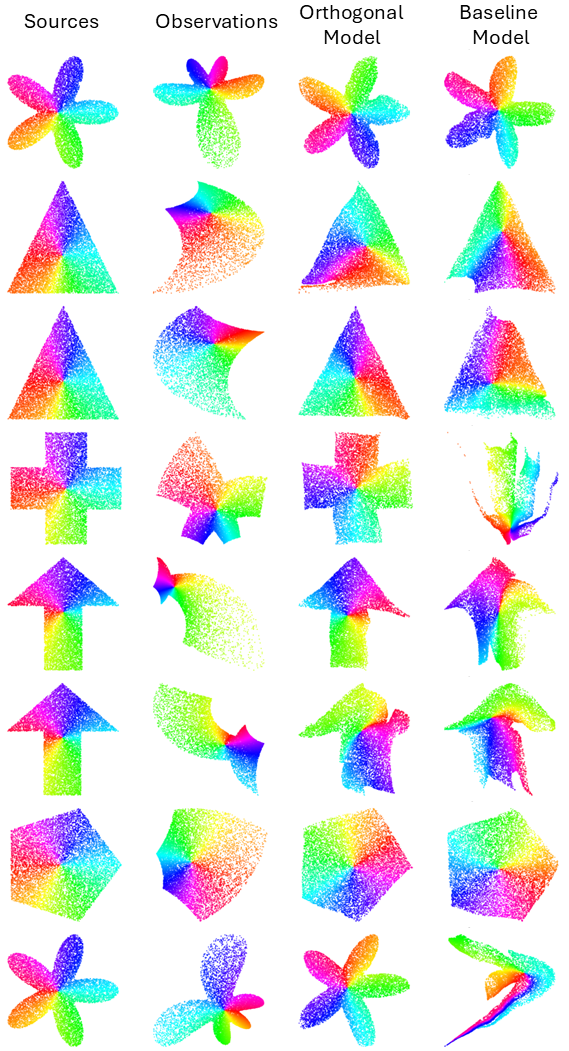}
\caption{Qualitative disentanglement results for non-cubical latent domains. From left to right: ground-truth sources sampled from a structured domain, observations, reconstructions obtained with the orthogonally constrained model, and reconstructions obtained with the unconstrained model.}
\label{fig:conf1}
\end{figure}

\begin{figure}[h]
\centering
\includegraphics[height=0.95\textheight]{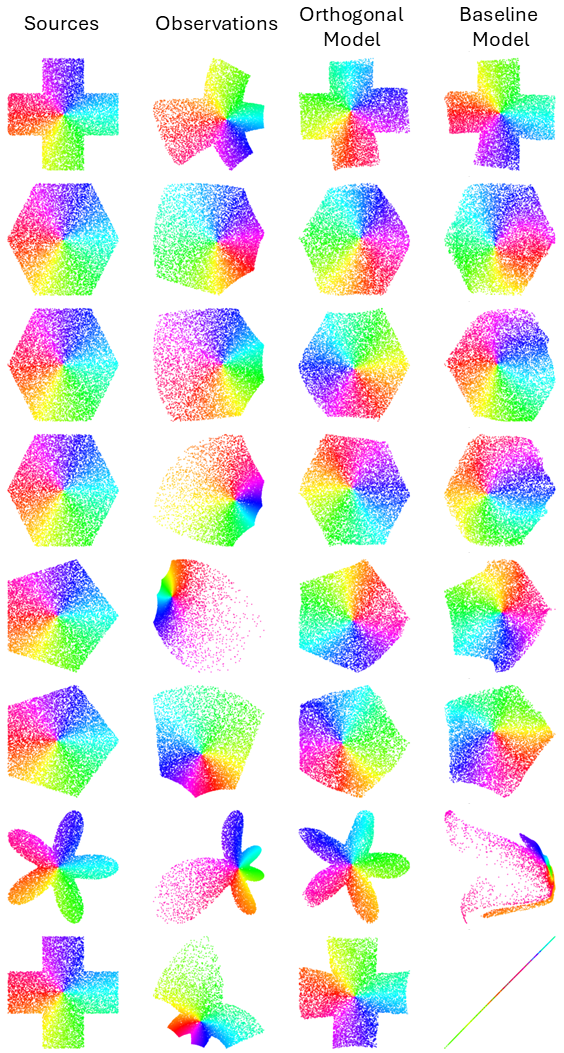}
\caption{Additional results for non-cubical latent domains. The constrained model successfully recovers the sources up to rigid transformations of the domain, while the unconstrained model fails to preserve the latent geometry.}
\label{fig:conf2}
\end{figure}


\end{document}